\runningtitle{Variable Importance with Unobserved Confounding and the Rashomon Effect}
\newtheorem{theorem}{Theorem}
\newtheorem{lemma}{Lemma}
\newtheorem{proposition}{Proposition}
\newtheorem{assumption}{Assumption}
\newtheorem{corollary}{Corollary}
\newcommand{\Obs}{X}
\newcommand{\Unobs}{U}
\newcommand{\obs}{x}
\newcommand{\unobs}{u}
\newcommand{\D}{\mathcal{D}^{(n)}}
\newcommand{\ObsDist}{\mathcal{P}_{XY}}
\newcommand{\CondDist}[1]{\mathcal{P}_{XY|U=#1}}
\newcommand{\UnobsDist}{\mathcal{P}_{U}}
\newcommand{\Xspace}{\mathcal{X}}
\newcommand{\Yspace}{\mathcal{Y}}
\newcommand{\Uspace}{\mathcal{U}}
\newcommand{\optmodel}{g^*}
\newcommand{\condsubmodel}[2]{f_{#1}(#2)}
\newcommand{\condsubmodelNoInput}[1]{f_{#1}}
\newcommand{\maxcondloss}{\epsilon_{unobs}}
\newcommand{\maxdistshift}{\tau}
\newcommand{\allcondmodels}{S^*}
\newcommand{\poprset}{\mathcal{R}}
\newcommand{\estrset}{\hat{\mathcal{R}}^{(n)}}
\newcommand{\smarteps}{\varepsilon_n}
\newcommand{\estsmarteps}{\hat{\varepsilon}_n}
\newcommand{\suplambda}{\lambda_{\sup}}
\newcommand{\vibound}{\alpha}
\newcommand{\ourmethodAcronym}{UNIVERSE}
\newcommand{\ourmethodName}{UNobservables and Inference for Variable importancE using Rashomon SEts}
\definecolor{gold}{RGB}{255, 215, 0}
\newcommand{\goldlozenge}{{\color{gold} \blacklozenge}}
\begin{document}

\twocolumn[
\aistatstitle{Doctor Rashomon and the UNIVERSE of Madness: Variable Importance with Unobserved Confounding and the Rashomon Effect}
\aistatsauthor{ Jon Donnelly$^*$ \And Srikar Katta$^*$ \And  Emanuele Borgonovo \And Cynthia Rudin }
\aistatsaddress{ jon.donnelly@duke.edu\\Duke University \And  srikar.katta@duke.edu\\Duke University \And emanuele.borgonovo@unibocconi.it\\Bocconi University \And cynthia@cs.duke.edu\\Duke University } 
]

\begin{abstract}

    Variable importance (VI) methods are often used for hypothesis generation, feature selection, and scientific validation. In the standard VI pipeline, an analyst estimates VI for a \textit{single} predictive model with only the \textit{observed} features. However, the importance of a feature depends heavily on which other variables are included in the model, and essential variables are often omitted from observational datasets. Moreover, the VI estimated for one model is often not the same as the VI estimated for another equally-good model -- a phenomenon known as the \textit{Rashomon Effect}. We address these gaps by introducing \ourmethodName~(\ourmethodAcronym). Our approach adapts Rashomon sets -- the sets of near-optimal models in a dataset -- to produce bounds on the true VI even with missing features. We theoretically guarantee the robustness of our approach, show strong performance on semi-synthetic simulations, and demonstrate its utility in a credit risk task. 
\end{abstract}

\def\thefootnote{*}\footnotetext{These authors contributed equally to this work. }

\section{Introduction} \label{sec:intro}
Variable importance (VI) methods are used throughout science to study the strength of different risk factors, generate new hypotheses, and understand which features are worth collecting for future predictions. In the standard variable importance pipeline, a scientist calculates the importance of a variable for a \textit{single} predictive model using only the \textit{observed} features. However, this approach may lead to misleading insights because multiple models may explain a dataset equally well -- a phenomenon known as the \textit{Rashomon effect} \citep{breiman2001statistical} -- and a variable deemed important for one model may not be important for another. To overcome the Rashomon effect, recent research has developed algorithms to compute the importance of variables across the \textit{Rashomon set}, the set of all near-optimal models for a given dataset \citep{fisher2019all, dong2020exploring, donnelly2023rashomon, xin2022exploring, chen2023understanding, babbar2025near}. These methods are able to discover the true importance of variables in experiments \cite{donnelly2023rashomon} but are not valid when there may be important unobserved variables, which is often the case in practice. 


\begin{figure}[t]
    \centering
    \includegraphics[width=1\linewidth]{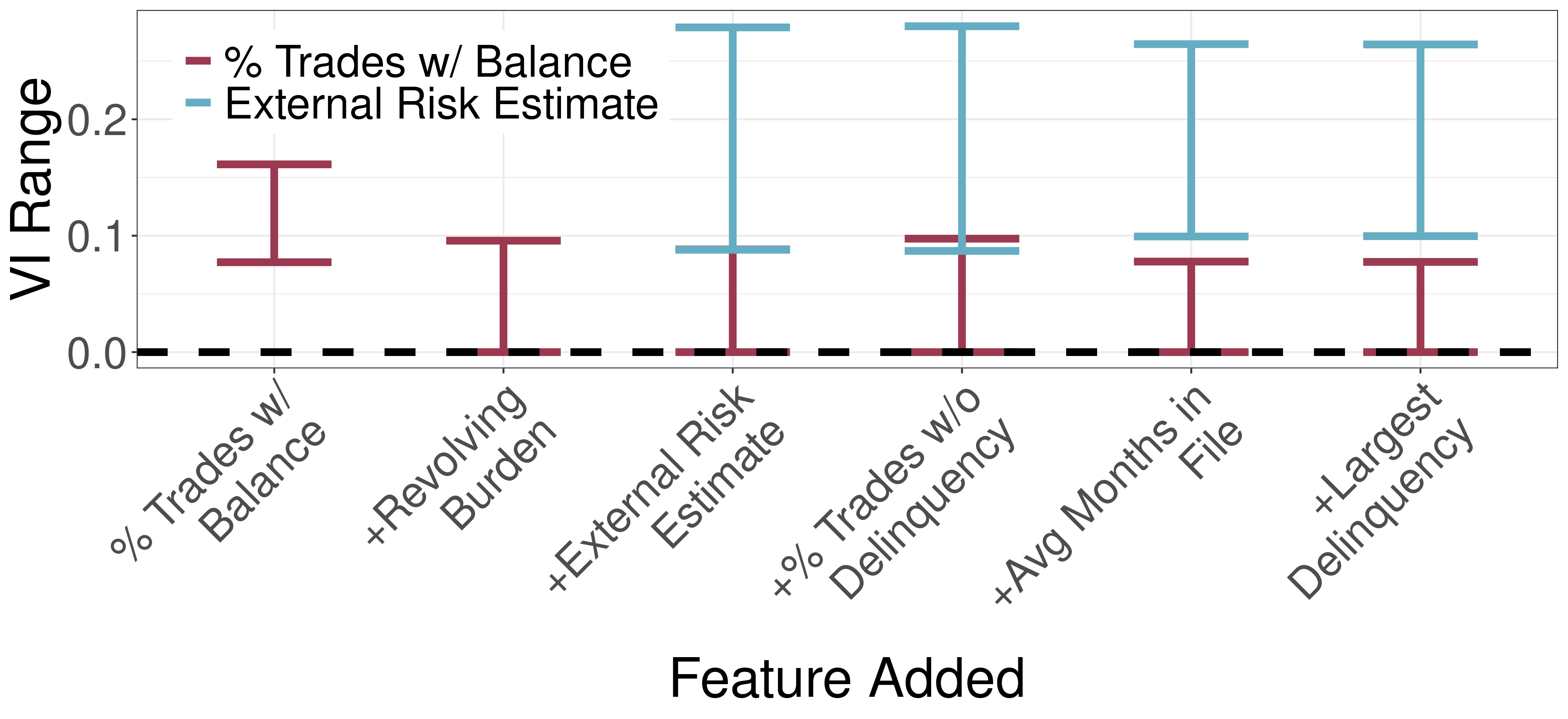}
    \caption{\textbf{Variable importance intervals for two variables from the FICO dataset, as more variables are observed.} When we estimate the range of variable importance  \citep[quantified by subtractive model reliance of][]{fisher2019all} with only one observed feature -- \texttt{\% Trades w/ Balance} -- this feature seems important, with a confidence interval that does not overlap with 0. However, as we introduce more variables into the analysis pipeline, this variable's confidence intervals overlap with 0, no longer yielding significant results. In contrast, \texttt{External Risk Estimate} -- the third feature incorporated into the analysis -- remains significant even after controlling for other observed variables. Thus, observing additional information changes the conclusion a practitioner would draw about \texttt{\% Trades w/ Balance}.}
    \label{fig:real_intervals_intro}
\end{figure}

Figure~\ref{fig:real_intervals_intro} illustrates this problem: we present the range of variable importance for two variables across the Rashomon set as we incorporate more features into the model to predict loan default risk. 
When only a single feature is observed, the \texttt{\%Trades w/ Balance} feature appears statistically significant for predicting loan default. But as we incorporate features that we know are important, this variable no longer seems important. This inconsistency shows how omitted variables can undermine decision-making.

In this work, we introduce \ourmethodName~(\ourmethodAcronym) to address both challenges. We take advantage of Rashomon sets to derive bounds on the true variable importance under the possibility of unobserved variables and adapt our Rashomon sets to account for finite sample uncertainty. We prove theoretically that our bounds contain the true variable importance for the true conditional mean function under unobserved confounding and have valid Type-1 error control. Via semi-synthetic experiments, we demonstrate that our bounds are tight enough to draw useful conclusions in practice, even in the presence of unmeasured variables and the Rashomon effect.

\section{Related Work} \label{sec:related_work}

We review the literature on (i) variable importance measures, (ii) the Rashomon effect, and (iii) unobserved confounding. To our knowledge, our work is the first to consider all three components simultaneously.

\paragraph{Variable Importance} 
The most classical examples of variable importance metrics include the parameters of a linear model and the permutation importance from a decision tree \citep{louppe2013understanding, kazemitabar2017variable}. Because these metrics may be restrictive, researchers have also introduced perturbation-based methods that can be applied to any model, which quantify how much a specified model's performance changes as variables are perturbed. Such methods include Shapley additive explanations (SHAP) \citep{lundberg2017unified}, local interpretable model-agnostic explanations (LIME) \citep{ribeiro2016should}, model reliance \citep{breiman2001random, fisher2019all}, and conditional model reliance \citep{fisher2019all}. However, these approaches only estimate the importance of a feature for the \textit{specified} model and do not consider the \textit{Rashomon Effect}.

\paragraph{The Rashomon Effect} The Rashomon Effect -- coined by \citet{breiman2001statistical} -- describes the phenomenon in which multiple, possibly distinct, models describe a given dataset equally well. 
The Rashomon effect is ubiquitous \citep{paes2023inevitability} and has been observed in high-stakes domains, making the development of tools that are robust to the Rashomon effect essential for trustworthy decision making \citep{rudin2024amazing}. 
The most common approach to ensure robustness to the Rashomon effect is to estimate a Rashomon \textit{ set} -- the set of all near-optimal models in a model class. Recent advancements have led to the estimation/approximation of Rashomon sets for decision trees \citep{xin2022exploring, babbar2025near}, risk scores \citep{liu2022fasterrisk}, generalized additive models \citep{chen2023understanding}, and prototypical part neural networks \citep{donnelly2025rashomon}. Analysts can then find the range, point cloud, or distribution of variable importance across Rashomon sets \citep{fisher2019all, dong2020exploring, donnelly2023rashomon}.

Several approaches also offer a related type of robustness -- to model misspecification stemming from finite sample model uncertainty. Some methods are built for specific variable importance metrics, such as Leave One Covariate Out (LOCO) or SHapley Additive Explanations  \citep{williamson2021general, williamson2020efficient, zhang2020floodgate, massimo2022floodgate, verdinelli2024feature}. While other methods should guarantee robustness due to finite-sample model misspecification \citep{lei2018distribution}, these approaches are not robust to unobserved confounding.


\paragraph{Unobserved confounding in causal inference} Unobserved features pose a pervasive problem in causal inference: when some important confounders are unobserved, treatment effects estimated on the observed data will be biased. To overcome this issue, analysts often design sensitivity analyses to identify a \textit{set} of viable causal effects under a reasonable level of possible confounding \citep{rosenbaum1987sensitivity, rosenbaum2007sensitivity, manski2003partial}. We take a similar approach: we identify upper and lower bounds on variable importance under a reasonable level of unobserved confounding. Unlike causal sensitivity analyses that require a new approach for each new causal estimand, our approach is generic and can handle \textit{any} model-based variable importance measure. While some approaches have relatively mild assumptions for causal estimands, these are too restrictive for variable importance metrics. For example, the most flexible sensitivity analysis framework by \citet{chernozhukov2022long} still requires that the causal estimand is a linear functional of the data but common variable importance metrics like SHAP are \textit{quadratic} functionals \citep{verdinelli2024feature}. Our approach overcomes these restrictive assumptions and is useful for a wide class of VI metrics.

\section{Methods} \label{sec:methods}

Let $\D = \{(\Obs_i, Y_i)\}_{i=1}^n$ represent a dataset consisting of $n$ independent and identically distributed (i.i.d.) tuples drawn from some distribution $\ObsDist$, where $Y_i \in \Yspace \subseteq \mathbb{R}$ is the prediction target of interest and $X_i \in \Xspace \subseteq \mathbb{R}^p$ is a vector of $p$ covariates.

Consider a function class $\mathcal{F}$ consisting of models that output a prediction given the observed variables $\Obs_i$ (e.g., the space of all possible sparse decision trees constructed using $\Obs_i$). 
Let $\phi_j$ denote a function that quantifies the importance of variable $j$ to some model $f \in \mathcal{F}$ for some observation $(\Obs_i, Y_i)$; this is a \textit{local} variable importance quantity, such as local SHAP. We describe how some common variable importance metrics, including permutation importance and SHAP, can be expressed in these terms in Appendix~\ref{app:applicable_vi_metrics}. In our experiments, we focus on subtractive model reliance \citep{fisher2019all} because it is simple to interpret and easy to compute. Let $\Phi_j(f, \ObsDist) := \mathbb{E}_{(\Obs, Y) \sim \ObsDist}[\phi_j(f, (\Obs, Y))]$ represent the \textit{average} importance of variable $j$ over the population data distribution $\ObsDist$; for example, when our target quantity is global SHAP for feature $j$ and the model $f$, $\phi_j(f, (\Obs_i, Y_i))$ measures the local SHAP for subject $i$, and $\Phi_j(f, \ObsDist)$ measures global SHAP as the average over subject-level SHAP. We refer to the average importance of variable $j$ over the \textit{empirical} data distribution as $\Phi_j(f, \D) := \frac{1}{n} \sum_{i=1}^n\phi_j(f, (\obs_i, y_i))$. 

We consider the setting in which $p_U$ key variables $\Unobs \in \Uspace \subseteq \mathbb{R}^{p_U}$ are not observed.
Our goal is to quantify variable importance for the true conditional mean function that observes both $\Obs$ and $\Unobs$: $\optmodel(x, u) := \mathbb{E}_{Y \mid X=x, U=u}[Y \mid x,u]$. We define our estimand as $\Phi_j(\optmodel, \UnobsDist) := \mathbb{E}_{(\Obs, \Unobs, Y) \sim \UnobsDist}[\phi_j(\optmodel, (\Obs, \Unobs, Y))]$ where $\UnobsDist$ describes the true distribution from which $(\Obs, \Unobs, Y)$ tuples are sampled. Because $\UnobsDist \neq \ObsDist$ in many contexts, the VI of the population risk minimizer defined over the \textit{observed} variables is not necessarily the same as the VI for the optimal model defined over both observed \textit{and} unobserved variables.

\begin{figure}
    \centering
    \includegraphics[width=0.99\linewidth]{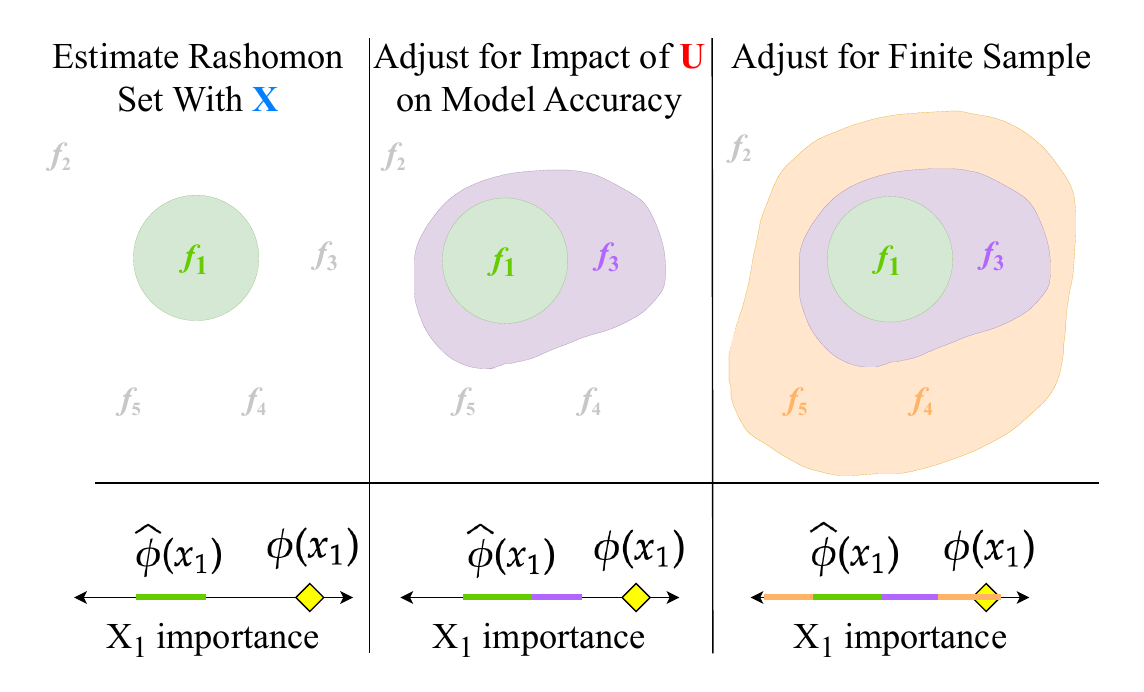}
    \caption{An overview of our framework showing the expansion of the Rashomon set as we account for more sources of error. The $\goldlozenge$ represents the true importance of variable $X_1$ having observed all necessary features. We consider three key factors: the Rashomon Effect, unobserved variables $\Unobs$, and finite sample errors. When adjusting for $\Unobs$, we expand which models are considered as part of the Rashomon set, potentially widening our variable importance interval. Adjusting for finite sample considerations both expands the Rashomon set and expands the range of variable importance values for each model, further widening our intervals.}
    \label{fig:framework}
\end{figure}

If we fix the unobserved variables to a specific quantity, then the only input to $g^*$ that varies is $X$. We define this \textit{conditional sub-model}, conditioned on the unobserved variables being $U = u$, as $\condsubmodel{u}{x} := \mathbb{E}_{Y \mid X=x, U = u}[Y \mid X = x, U = u]$. We will construct bounds on the true variable importance for the optimal model using the \textit{set} of \textit{all} conditional sub-models:
\begin{equation}
    \allcondmodels := \left\{ f_u \mid u \in \Uspace \right\}.
\end{equation}
Throughout this work, we assume that $\Uspace$, $\allcondmodels$, and the model class $\mathcal{F}$, are finite sets for simplicity of notation, although these results can easily be extended to infinite sets. Figure \ref{fig:framework} outlines our framework.

\subsection{Estimating Distribution-Invariant Quantities Over $\allcondmodels$}
Some target functions $\Phi$, such as the coefficient vector of linear models, are \textbf{distribution-invariant}, which means $\Phi(f, \mathcal{P}) = \Phi(f, \mathcal{P}')$ for all $f \in \mathcal{F}$, even if data distributions differ. 
We first discuss how we can construct bounds for distribution-invariant target functions when we have omitted variables and then generalize our analysis to consider \textbf{distribution-dependent} functions (e.g., permutation importance) in Section~\ref{sec:connecting_rset_to_target}.


We capture $\allcondmodels$ by leveraging the \textit{Rashomon set}: the set of \textit{all good models} for some objective.
We define the \textit{population, $\epsilon$-threshold Rashomon set} as
\begin{align}
\begin{split}
    &\poprset(\epsilon; \mathcal{F}, \ell, \lambda, \ObsDist) := \\
    &\{f \in \mathcal{F} :
    \mathbb{E}_{(\Obs, Y) \sim \ObsDist}[\ell(f, \Obs, Y; \lambda)] + \lambda(f) \leq \epsilon \}.
\end{split}
\end{align}
The population Rashomon set is the set of all models $f$ from a model class $\mathcal{F}$ with expected loss $\ell : \mathcal{F} \times \Xspace \times \Yspace \to \mathbb{R}$ with respect to the distribution $\ObsDist$ plus model-level regularization penalty $\lambda:\mathcal{F} \to \mathbb{R}$ below some specified threshold $\epsilon$. When regularization is not applied, we use the ``null regularization,'' which we define as $\lambda_0(f):=0$ $\forall f \in \mathcal{F}.$ For notational convenience, we drop notation for conditioning on $\mathcal{F}, \ell, \ObsDist$ unless necessary.
We apply the following assumption to connect $\allcondmodels$ to $\poprset$:
\begin{assumption}
\label{assm:bounded_loss}
    Assume that, for a known $\maxcondloss \geq 0$,
\begin{align}
    \mathbb{E}_{(\Obs, Y) \sim \ObsDist} [\ell(\condsubmodelNoInput{\unobs}, \Obs, Y)] \leq \maxcondloss & \quad \forall \condsubmodelNoInput{u} \in \allcondmodels.
\end{align}
\end{assumption}
The value of $\maxcondloss$ bounds the loss of each conditional sub-model ($\condsubmodelNoInput{u}$) over observed features $(\Obs, Y) \sim \ObsDist.$ This quantity measures the \textit{heterogeneity} of the true conditional mean function across subgroups defined by the unobserved features. If there was no heterogeneity between unobserved groups, then $\condsubmodelNoInput{u}$ would predict $Y$ perfectly given $\Obs$ and $\maxcondloss = 0,$ so a small value for $\maxcondloss$ reflects the belief that most of the important information for this predictive task has been measured.
Under Assumption \ref{assm:bounded_loss}, we can use $\maxcondloss$ to connect $\allcondmodels$ to a quantity dependent only on observed features:
\begin{proposition}
\label{prop:poprset_coverage}
    Under Assumption~\ref{assm:bounded_loss} and $\allcondmodels \subseteq \mathcal{F}$, we know that $\allcondmodels \subseteq \poprset(\maxcondloss; \lambda_0).$
\end{proposition}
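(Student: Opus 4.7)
The plan is to observe that this proposition is essentially an unpacking of definitions, so the proof will be a short direct argument rather than anything requiring a technical inequality chain. I would start by fixing an arbitrary conditional sub-model $f_u \in \allcondmodels$ and verifying the two conditions required for membership in $\poprset(\maxcondloss; \lambda_0)$: namely, that $f_u$ lies in the model class $\mathcal{F}$, and that its regularized population loss is at most $\maxcondloss$.

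For the first condition, I would appeal directly to the hypothesis $\allcondmodels \subseteq \mathcal{F}$, which gives $f_u \in \mathcal{F}$ for free. For the second, I would invoke Assumption~\ref{assm:bounded_loss}, which states that $\mathbb{E}_{(\Obs, Y) \sim \ObsDist}[\ell(f_u, \Obs, Y)] \leq \maxcondloss$ for every $f_u \in \allcondmodels$. Since we are using the null regularization $\lambda_0(f) = 0$ for every $f \in \mathcal{F}$, adding it contributes nothing, so
\begin{equation*}
    \mathbb{E}_{(\Obs, Y) \sim \ObsDist}[\ell(f_u, \Obs, Y; \lambda_0)] + \lambda_0(f_u) \leq \maxcondloss.
\end{equation*}
By the definition of the population Rashomon set, this places $f_u$ in $\poprset(\maxcondloss; \lambda_0)$. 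Since $f_u$ was arbitrary, the containment $\allcondmodels \subseteq \poprset(\maxcondloss; \lambda_0)$ follows.

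There is no real obstacle here, as the proposition is a definitional bridge that justifies why $\maxcondloss$ is the natural Rashomon threshold for capturing all conditional sub-models. The only mild subtlety worth flagging is being careful that the loss appearing in Assumption~\ref{assm:bounded_loss} is the same loss used in the definition of $\poprset$, and that $\lambda_0$ is indeed admissible as a regularizer in that definition. Both are explicit in the setup, so the argument is essentially a one-liner once the definitions are aligned.
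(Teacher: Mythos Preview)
Your proposal is correct and matches the paper's own proof essentially line for line: both simply invoke Assumption~\ref{assm:bounded_loss} for the loss bound, note that $\lambda_0$ contributes nothing, and conclude membership in $\poprset(\maxcondloss; \lambda_0)$ directly from the definition. The only (trivial) difference is that you make the use of the hypothesis $\allcondmodels \subseteq \mathcal{F}$ explicit, which the paper's proof leaves implicit.
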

This proposition simplifies our goal: rather than finding $\allcondmodels$ -- which depends on unobservables -- we can instead estimate $\poprset(\maxcondloss)$, a superset of $\allcondmodels$ that depends only on the distribution of observed quantities, given knowledge of an upper bound on $\maxcondloss$. However, $\poprset(\maxcondloss)$ is a population-level quantity, and we only observe a finite sample of data. We define the \textit{empirical, $\epsilon$-threshold Rashomon set} over a set of $n$ samples as:
\begin{align}
\begin{split}
    &\estrset(\epsilon; \mathcal{F}, \ell, \lambda) \\
    &:= \left\{f \in \mathcal{F} : \frac{1}{n} \sum_{i=1}^n\ell(f, \obs_i, y_i) + \lambda(f) \leq \epsilon \right\}.
\end{split}
\end{align}

The empirical Rashomon set contains models whose \textit{empirical loss} is less than some specified threshold $\epsilon$. We can directly compute the empirical Rashomon set for several model classes including decision trees \citep{xin2022exploring} and kernel ridge regression \citep{fisher2019all}. In general, we cannot guarantee that $\poprset(\epsilon; \mathcal{F}, \ell, \lambda_0) = \estrset(\epsilon; \mathcal{F}, \ell, \lambda)$ because of sampling uncertainty and regularization bias. The following theorem provides probabilistic, finite sample bounds connecting $\estrset$ to $\poprset$ by correcting for these issues:

\begin{theorem}
\label{thm:smart_eps}
     Let $\delta \in (0, 1)$ denote a desired type-1 error rate. For any loss function $\ell$ bounded between $\ell_{\min}$ and $\ell_{\max}$ and $\epsilon \in [\ell_{\min}, \ell_{\max}]$, it holds that 
 \begin{align*}
     &\mathbb{P}\left(\poprset(\epsilon;\mathcal{F}, \ell,\lambda) \subseteq \estrset\left(\smarteps + \epsilon + \lambda_{\sup};\mathcal{F}, \ell, \lambda\right)\right) \geq 1 - \delta,\\
 &\text{where\;\;}\smarteps =  \sqrt{\frac{(\ell_{\max}-\ell_{\min})^2\ln\left(\frac{C}{\delta} \right)}{2n}}
 \end{align*}
 for sample size $n$, for any value $C \geq |\poprset(\maxcondloss;\mathcal{F}, \ell, \lambda)|$ (e.g., $C:=|\mathcal{F}|$), regularization penalty $\lambda$, and regularization upper bound $\suplambda = \sup_{f \in \mathcal{F}} \lambda(f).$
\end{theorem}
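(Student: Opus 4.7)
The plan is to establish the stated containment via a one-sided Hoeffding concentration applied pointwise to each model and then to stitch these pointwise bounds together with a finite union bound. Since the only randomness is in the sample used to form $\estrset$, it suffices to show that with high probability, no model $f$ with population risk at most $\epsilon - \lambda(f)$ can have empirical risk exceeding its population risk by more than $\smarteps$.

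First, I would fix an arbitrary $f \in \mathcal{F}$. Because $\ell(f, X, Y) \in [\ell_{\min}, \ell_{\max}]$ almost surely and the $n$ samples are i.i.d., the one-sided Hoeffding inequality yields, for every $t > 0$,
\[
\mathbb{P}\!\left(\frac{1}{n}\sum_{i=1}^n \ell(f,x_i,y_i) - \mathbb{E}[\ell(f,X,Y)] > t\right) \leq \exp\!\left(-\frac{2nt^2}{(\ell_{\max}-\ell_{\min})^2}\right).
\]
Setting the right-hand side equal to $\delta/C$ and solving for $t$ recovers exactly $t = \smarteps$. Second, I would union bound this deviation event over the finite collection of models that could witness a violation, paying total probability $C \cdot (\delta/C) = \delta$; choosing $C = |\mathcal{F}|$ is always valid, while the tighter $C \geq |\poprset(\maxcondloss;\mathcal{F},\ell,\lambda)|$ works provided one has argued that all relevant models lie in this smaller set. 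Third, on the complementary event of probability at least $1-\delta$, for any $f \in \poprset(\epsilon;\mathcal{F},\ell,\lambda)$ the definition of the population Rashomon set combined with the concentration bound yields
\[
\frac{1}{n}\sum_{i=1}^n \ell(f,x_i,y_i) + \lambda(f) \leq \bigl(\epsilon - \lambda(f)\bigr) + \smarteps + \lambda(f) = \epsilon + \smarteps \leq \epsilon + \smarteps + \suplambda,
\]
so $f \in \estrset(\smarteps + \epsilon + \suplambda;\mathcal{F},\ell,\lambda)$, which is the desired containment.

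The main subtlety is getting the size of the union bound right. Controlling it by $C \geq |\poprset(\maxcondloss;\mathcal{F},\ell,\lambda)|$ (rather than by $|\mathcal{F}|$) is only legitimate once one has verified that any $f$ capable of violating the containment must already lie in that smaller Rashomon set; the crude fallback $C = |\mathcal{F}|$ sidesteps this but yields a looser bound. The extra $\suplambda$ slack in the advertised threshold is not strictly necessary when the same regularizer $\lambda$ is used on both sides (the chained inequality above already gives $\epsilon + \smarteps$), but it is harmless and provides buffer for applying the result with mismatched regularization between the population and empirical sides, as happens when combining with Proposition~\ref{prop:poprset_coverage} where $\lambda_0$ plays the role of the population regularizer.
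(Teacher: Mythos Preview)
Your proposal is correct and follows the same Hoeffding-plus-union-bound route as the paper's proof. Your chained inequality is in fact slightly tighter than the paper's---the paper spends $\suplambda$ to drop $\lambda(f)$ early and then separately invokes $\epsilon - \ell(f) \geq 0$, whereas your version carries $\lambda(f)$ through and cancels it, which confirms your closing observation that the $\suplambda$ slack is redundant when the same regularizer appears on both sides.
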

\textbf{Theorem \ref{thm:smart_eps} guarantees, with high-probability, that the estimated Rashomon set with Rashomon threshold $\smarteps + \epsilon + \suplambda$ is a  \textit{finite sample} superset of the population $\epsilon$-threshold Rashomon set}. Although our population Rashomon set in Proposition \ref{prop:poprset_coverage} considers an unregularized objective (i.e., $\lambda=\lambda_0$), existing algorithms for estimating Rashomon sets \textit{require} non-zero regularization penalties \citep{xin2022exploring, chen2023understanding}; as such, we also include the worst-case regularization $\suplambda$ in our finite-sample correction. This correction ensures that a model that would be in the unregularized empirical Rashomon set is not excluded due to regularization bias. For example, if the regularization function is 0.01 times the number of leaves in a tree and $\condsubmodelNoInput{u}$ is a tree with two leaves, we would have $\ell(\condsubmodelNoInput{u}, X, Y; \lambda) = 0 + \lambda(\condsubmodelNoInput{u}) = 0.02$ for loss over $\CondDist{u}$. Even though $\condsubmodelNoInput{u}$ predicts perfectly, it has loss $0.02$, and could be excluded from Rashomon sets if we do not adjust for regularization bias.

Theorem~\ref{thm:smart_eps} is of independent interest to researchers using Rashomon sets in other contexts, like fairness \citep{marx2020predictive}. \textbf{Theorem~\ref{thm:smart_eps} is the first bound connecting empirical and population Rashomon sets that controls type-1 error by accounting for \textit{both} finite sample \textit{and} regularization biases.}

Because we can construct high-probability supersets for the population Rashomon set, which is a superset of $\allcondmodels$ under Assumption \ref{assm:bounded_loss}, we know that the empirical Rashomon set with threshold specified in Theorem~\ref{thm:smart_eps} contains all models in $\allcondmodels$ with high probability:

\begin{corollary}
\label{cor:s_coverage}
    Let $\maxcondloss$ be defined as in Assumption \ref{assm:bounded_loss}, and assume that $\allcondmodels \subseteq \mathcal{F}$. For any loss function $\ell$ bounded between $\ell_{\min}$ and $\ell_{\max}$, it holds that $\forall \delta\in(0,1),$
 \begin{align*}
     \mathbb{P}\left(\allcondmodels \subseteq \estrset(\smarteps + \maxcondloss + \lambda_{\sup}; \mathcal{F}, \ell, \lambda)\right) \geq 1 - \delta,
 \end{align*}
 for a sample of size $n$ and 
 regularization penalty $\lambda$, with $\smarteps$ defined as in Theorem \ref{thm:smart_eps}.
\end{corollary}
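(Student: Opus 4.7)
The plan is to chain Proposition~\ref{prop:poprset_coverage} with Theorem~\ref{thm:smart_eps}, being careful that the regularization penalty $\suplambda$ enters the final threshold only once. First, I would invoke Proposition~\ref{prop:poprset_coverage} to place $\allcondmodels$ deterministically inside the unregularized population Rashomon set $\poprset(\maxcondloss; \mathcal{F}, \ell, \lambda_0)$. This step uses Assumption~\ref{assm:bounded_loss} together with the hypothesis $\allcondmodels \subseteq \mathcal{F}$ of the corollary.

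Next, I would apply Theorem~\ref{thm:smart_eps} with $\epsilon = \maxcondloss$ and the null regularization $\lambda_0$, for which $\sup_{f \in \mathcal{F}}\lambda_0(f) = 0$. Taking $C = |\mathcal{F}|$ as a valid upper bound on $|\poprset(\maxcondloss; \mathcal{F}, \ell, \lambda_0)|$, this yields
\[
\mathbb{P}\!\left(\poprset(\maxcondloss; \mathcal{F}, \ell, \lambda_0) \subseteq \estrset(\smarteps + \maxcondloss; \mathcal{F}, \ell, \lambda_0)\right) \geq 1 - \delta,
\]
with $\smarteps$ defined exactly as in the theorem statement.

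Finally, I would absorb the general regularization $\lambda$ that appears in the corollary: if $f$ satisfies $\frac{1}{n}\sum_i \ell(f, x_i, y_i) \leq \smarteps + \maxcondloss$, then since $\lambda(f) \leq \suplambda$ by definition, it follows that $\frac{1}{n}\sum_i \ell(f, x_i, y_i) + \lambda(f) \leq \smarteps + \maxcondloss + \suplambda$. Hence $\estrset(\smarteps + \maxcondloss; \mathcal{F}, \ell, \lambda_0) \subseteq \estrset(\smarteps + \maxcondloss + \suplambda; \mathcal{F}, \ell, \lambda)$ deterministically, and composing the three inclusions delivers the corollary with probability at least $1 - \delta$.

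I do not anticipate any real obstacle here, since the corollary is essentially a bookkeeping argument composing the two preceding results. The only subtle point is the order of operations: applying Theorem~\ref{thm:smart_eps} directly with the general $\lambda$ of interest would pick up $\suplambda$ twice, because Proposition~\ref{prop:poprset_coverage} only places $\allcondmodels$ inside a $\lambda_0$-regularized population Rashomon set. Applying the theorem with $\lambda_0$ first and absorbing the regularization bias only at the end produces the tighter threshold that matches the corollary statement.
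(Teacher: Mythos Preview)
Your proposal is correct and follows essentially the same chaining argument as the paper: place $\allcondmodels$ inside the population Rashomon set via Proposition~\ref{prop:poprset_coverage}, then invoke Theorem~\ref{thm:smart_eps} to pass to the empirical Rashomon set. The paper applies Theorem~\ref{thm:smart_eps} directly with the general $\lambda$ (so that $\suplambda$ is absorbed inside the theorem), whereas you apply it with $\lambda_0$ and handle the regularization as a separate deterministic inclusion afterward; your version is arguably cleaner about where $\suplambda$ enters, but the substance is the same.
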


Note that both Theorem~\ref{thm:smart_eps} and Corollary~\ref{cor:s_coverage} define $\smarteps$ using an upper bound on the size of the population Rashomon set, i.e. $C \geq |\poprset(\maxcondloss; \mathcal{F}, \ell, \lambda)|$. It can be difficult to calculate a tight $C$, since in practice we do not know $|\poprset(\maxcondloss; \mathcal{F}, \ell, \lambda)|$. We can guarantee these bounds hold by setting $C:=|\mathcal{F}|,$ but this can yield large empirical Rashomon sets for large model classes. 

In practice, we can often provide tighter bounds by first estimating the size of the population Rashomon set using a separate data split. The following proposition shows that the size of the empirical Rashomon set quickly converges to the size of the population Rashomon set with the same parameters, meaning that we can guarantee \textit{asymptotic}, type-1 error control:

\begin{proposition}
\label{prop:rset_size_mse}
The size of the estimated Rashomon set with threshold $\epsilon' > \epsilon$ is an upper bound on the size of the population Rashomon set with threshold $\epsilon$ with high probability:
\begin{align*}
    \mathbb{P}_{(\Obs, Y) \sim \ObsDist}\left( |\estrset(\epsilon')| > |\poprset(\epsilon)| \right) = 1 - O(n^{-1}).
\end{align*}
\end{proposition}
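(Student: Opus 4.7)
My strategy is to lower-bound the target probability $\mathbb{P}(|\estrset(\epsilon')| > |\poprset(\epsilon)|)$ by establishing the stronger set-containment event $\poprset(\epsilon) \subseteq \estrset(\epsilon')$. Whenever this containment holds we get $|\estrset(\epsilon')| \geq |\poprset(\epsilon)|$, which drives the asymptotic conclusion; the strict inequality ``$>$'' I would handle either by observing that a gap of $\epsilon' - \epsilon > 0$ typically allows some extra model to enter $\estrset(\epsilon')$, or by reinterpreting the statement as ``$\geq$'' up to a $O(n^{-1})$ correction that is already absorbed by the rate.

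First, I would decompose the containment-failure event. Fix $f \in \poprset(\epsilon)$, so that $\mathbb{E}_{(\Obs, Y) \sim \ObsDist}[\ell(f, \Obs, Y)] + \lambda(f) \leq \epsilon$. If in addition $f \notin \estrset(\epsilon')$, then
$$\frac{1}{n}\sum_{i=1}^n \ell(f, \obs_i, y_i) + \lambda(f) > \epsilon',$$
which forces the empirical mean of $\ell(f, \Obs, Y)$ to exceed its population expectation by strictly more than $\epsilon' - \epsilon > 0$. Second, I would apply Chebyshev's inequality to each fixed $f$: since samples are i.i.d., the empirical mean has variance $\varlossf / n$, so
$$\mathbb{P}\left( \frac{1}{n}\sum_{i=1}^n \ell(f, \obs_i, y_i) - \mathbb{E}[\ell(f, \Obs, Y)] > \epsilon' - \epsilon \right) \leq \frac{\varlossf}{n(\epsilon' - \epsilon)^2}.$$
Third, I would union-bound over $\poprset(\epsilon)$, which is finite under the paper's standing assumption, yielding
$$\mathbb{P}\bigl(\poprset(\epsilon) \not\subseteq \estrset(\epsilon')\bigr) \leq \frac{|\poprset(\epsilon)| \cdot \ubvarloss}{n (\epsilon' - \epsilon)^2} = O(n^{-1}),$$
where $\ubvarloss$ is a uniform upper bound on $\varlossf$ over $f \in \poprset(\epsilon)$.

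The main obstacle is choosing the correct concentration inequality to match the stated rate. Hoeffding would give sub-Gaussian decay, but it is both overkill and demands a bounded loss. Chebyshev delivers exactly the polynomial $O(n^{-1})$ rate appearing in the proposition and only requires a finite second moment of $\ell(f, \Obs, Y)$, which matches the paper's variance bookkeeping via $\varlossf$ and $\ubvarloss$. A secondary subtlety is the strict inequality in the proposition; this is easily handled by noting that $\epsilon' > \epsilon$ and using the continuity of the empirical loss distribution (or by a minor tightening of the containment step), and in any case does not affect the $O(n^{-1})$ convergence.
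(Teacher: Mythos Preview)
Your proposal is correct and follows essentially the same route as the paper: reduce to the containment event $\poprset(\epsilon) \subseteq \estrset(\epsilon')$, apply Chebyshev's inequality per model to bound $\mathbb{P}(f \notin \estrset(\epsilon'))$ by a term of order $n^{-1}(\epsilon'-\epsilon)^{-2}$, and finish with a union bound over the finite set $\poprset(\epsilon)$. The only cosmetic difference is that the paper bounds the variance explicitly via $(\ell_{\max}-\ell_{\min})^2/4$ from the bounded-loss assumption rather than invoking an abstract $\ubvarloss$, and it glosses over the strict-versus-nonstrict inequality you flagged.
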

In the appendix, Corollary~\ref{app-cor:asymp_rset_conv} uses Proposition \ref{prop:rset_size_mse} to show how using an estimated Rashomon set-size will yield a superset of $\allcondmodels$ with high probability asymptotically. Based on this theory, in practice, we use an estimate of the size of the Rashomon set for $C$.

We now use the tools necessary to recover the models from $\allcondmodels$
to guarantee coverage of distribution-invariant variable importance functions across $\allcondmodels.$ We present a general result that provides probabilistic coverage over the variable importance with respect to the entire \textit{observed} data distribution for each model in $\allcondmodels$:
\begin{theorem}
\label{thm:vi_coverage_uncond}
    Let $\gamma \in (0, 1)$ and $\vibound$ be a value such that, 
    $$\mathbb{P}\left(\forall f \in \allcondmodels, \Phi_j(f, \ObsDist) \in \left[ \Phi_j(f, \D) \pm \vibound \right]\right) \geq 1 - \gamma,$$
    where $\mathbb{P}$ denotes the probability of drawing the observed $n$ samples from $\mathcal{P}_{XY}$. It follows that, for all $f \in \allcondmodels,$ with probability at least $1 - (\delta + \gamma),$
    \begin{align*}
     &\left\{\Phi_j(f, \ObsDist) \mid \condsubmodelNoInput{} \in \allcondmodels \right\} \subseteq \\ 
    &\begin{bmatrix} 
     &\inf_{f' \in \estrset(\smarteps + \maxcondloss + \lambda_{\sup})}\Phi_j(f', \D) - \vibound, \\
     &\sup_{f' \in \estrset(\smarteps + \maxcondloss + \lambda_{\sup})}\Phi_j(f', \D) + \vibound
     \end{bmatrix},
    \end{align*}
    where $\smarteps$ and $\delta$ are defined as in Corollary \ref{cor:s_coverage}. 
\end{theorem}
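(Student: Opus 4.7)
The plan is to combine two high-probability events via a union bound and then chain the resulting set inclusion with the importance deviation bound. Specifically, Corollary~\ref{cor:s_coverage} guarantees that the event
\[
A := \left\{ \allcondmodels \subseteq \estrset(\smarteps + \maxcondloss + \suplambda) \right\}
\]
holds with probability at least $1 - \delta$, and the hypothesis on $\vibound$ guarantees that the event
\[
B := \left\{ \forall f \in \allcondmodels, \; \Phi_j(f, \ObsDist) \in \left[\Phi_j(f, \D) - \vibound, \Phi_j(f, \D) + \vibound\right] \right\}
\]
holds with probability at least $1 - \gamma$. A union bound then gives $\mathbb{P}(A \cap B) \geq 1 - (\delta + \gamma)$.

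Next, I would argue deterministically on the event $A \cap B$. Fix any $f \in \allcondmodels$. By event $A$, $f$ lies in $\estrset(\smarteps + \maxcondloss + \suplambda)$, so its empirical importance satisfies
\[
\inf_{f' \in \estrset(\smarteps + \maxcondloss + \suplambda)} \Phi_j(f', \D) \;\leq\; \Phi_j(f, \D) \;\leq\; \sup_{f' \in \estrset(\smarteps + \maxcondloss + \suplambda)} \Phi_j(f', \D).
\]
By event $B$, $\Phi_j(f, \ObsDist)$ lies within $\pm \vibound$ of $\Phi_j(f, \D)$. Combining the two inclusions by subtracting $\vibound$ from the lower end and adding $\vibound$ to the upper end yields that $\Phi_j(f, \ObsDist)$ is contained in the stated interval. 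Since this holds simultaneously for every $f \in \allcondmodels$ on the event $A \cap B$, we obtain the desired probabilistic set containment.

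There is no hard step here; the argument is a union bound followed by two interval chainings. The only subtlety worth flagging is that both events must be defined with respect to the same sample $\D$, so that the randomness over which Corollary~\ref{cor:s_coverage} controls $A$ and the randomness over which the hypothesis controls $B$ are coupled on the same probability space, allowing the union bound to apply. Assuming this coupling (which is natural, since $\vibound$ is defined in terms of $\D$), the result follows directly.
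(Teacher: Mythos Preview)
Your proposal is correct and mirrors the paper's proof essentially step for step: both define the same two events (Rashomon-set containment from Corollary~\ref{cor:s_coverage} and the uniform $\vibound$-concentration of $\Phi_j$), apply a union bound to get probability at least $1-(\delta+\gamma)$ on their intersection, and then chain the inf/sup over $\estrset$ with the $\pm\vibound$ deviation deterministically on that intersection. Your remark about the two events living on the same sample space is a nice explicit acknowledgment of a point the paper leaves implicit.
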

Theorem \ref{thm:vi_coverage_uncond} uses Corollary \ref{cor:s_coverage} to show that, given a finite sample bound for the estimation of the variable importance $\Phi_j$, 
we can compute an interval that contains the importance of a variable simultaneously for every model in $\allcondmodels$ with high probability. In our experiments, we apply established finite sample bounds for subtractive model reliance \citep{fisher2019all}.

\subsection{Connecting to  $\Phi(\optmodel, \UnobsDist)$} \label{sec:connecting_rset_to_target}
Because many variable importance functions depend heavily on the input distribution, it is not necessarily true that the variable importance for the optimal model will be contained within the bounds on variable importance for conditional submodels evaluated on the \textit{observed} data. This inequality can be seen through a simple application of the law of iterated expectation:
\begin{align*}
    &\Phi_j(\optmodel, \UnobsDist) \\ &= \mathbb{E}_{(\Obs, \Unobs, Y) \sim \UnobsDist}[\phi_j(\optmodel, (\Obs, Y))] \text{ (by definition)}\\
    &= \sum_{u \in \mathcal{U}} \mathbb{P}(U= u) \mathbb{E}_{(\Obs, Y) \sim \CondDist{u}}[\phi_j(\optmodel, (\Obs, Y))] \\
    &\quad \text{ (by law of iterated expectation)}\\
    &= \sum_{u \in \mathcal{U}} \mathbb{P}(U= u) \mathbb{E}_{(\Obs, Y) \sim \CondDist{u}}[\phi_j(\condsubmodelNoInput{u}, (\Obs, Y))] \\
    &\quad \text{ (by definition,  $\optmodel = \condsubmodelNoInput{u}$ when $U = u$)} \\
    &\neq \sum_{u \in \mathcal{U}} \mathbb{P}(U= u) \mathbb{E}_{(\Obs, Y) \sim \ObsDist}[\phi_j(\condsubmodelNoInput{u}, (\Obs, Y))] \\
    &\quad \text{ 
 (because $\CondDist{u} \neq \ObsDist$).}
\end{align*}
That is, the importance of variable $j$ to $g^*$ can be expressed in terms of the variable importance across every $\condsubmodelNoInput{\unobs}$ \textit{with respect to $\CondDist{\unobs}$}, the distribution of observed data \textit{conditional on the unobserved features}. This presents a problem, because we do not know which conditional distribution $\CondDist{\unobs}$ each sample in our observed dataset is drawn from because we do not know $u$. 
We overcome this challenge by bounding how sensitive our variable importance metric is to this kind of distribution shift:
\begin{assumption}
\label{assm:l1_dist}
    Assume that there exists a known $\maxdistshift_j$ such that, for all $u \in \Uspace$, 
    $
    \left|
        \Phi_j(\condsubmodelNoInput{u}, \CondDist{u}) - \Phi_j(\condsubmodelNoInput{u}, \mathcal{P}_{XY|U \neq u})
    \right| \leq \maxdistshift_j.
    $
\end{assumption}
We refer to this quantity $\maxdistshift_j$ as VI-drift. VI-drift is large if the data distribution conditioned on $U=u$ is very different from that conditioned on $U\neq u$ for some $u,$ and if the variable importance metric $\Phi_j$ is sensitive to this difference. We elaborate on this assumption in Appendix \ref{app:assm_elaboration}. 
%
Under Assumption \ref{assm:l1_dist}, we can now build on Theorem \ref{thm:vi_coverage_uncond} to provide finite sample bounds for $\Phi_j(\optmodel, \UnobsDist)$:

\begin{theorem}
\label{thm:optmodel_coverage}
Let $\vibound$ and $\gamma$ be defined as in Theorem \ref{thm:vi_coverage_uncond}, $\smarteps$ and $\delta$ be defined as in Theorem~\ref{cor:s_coverage}, and $\maxdistshift_j$ be defined as in Assumption~\ref{assm:l1_dist}.
With probability at least $1 - (\delta + \gamma), \Phi_j(\optmodel, \UnobsDist)\in$
    $$
    \begin{bmatrix} 
     &\inf_{f \in \estrset(\smarteps + \maxcondloss + \lambda_{\sup})}\Phi_j(f, \D) - \maxdistshift_j - \vibound, \\
     &\sup_{f \in \estrset(\smarteps + \maxcondloss + \lambda_{\sup})}\Phi_j(f, \D) + \maxdistshift_j + \vibound
     \end{bmatrix}.
    $$
\end{theorem}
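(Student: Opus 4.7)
The plan is to combine the law-of-iterated-expectation decomposition displayed just before the theorem with (i) Assumption~\ref{assm:l1_dist} to transport each conditional-sub-model importance from $\CondDist{u}$ to $\ObsDist$, and (ii) Theorem~\ref{thm:vi_coverage_uncond} to bound the resulting $\ObsDist$-importances by empirical Rashomon-set quantities. The starting point is
$$\Phi_j(\optmodel, \UnobsDist) = \sum_{u \in \Uspace} \mathbb{P}(U=u)\, \Phi_j(\condsubmodelNoInput{u}, \CondDist{u}),$$
so it suffices to place each $\Phi_j(\condsubmodelNoInput{u}, \CondDist{u})$ inside the claimed interval; the mixture weights $\mathbb{P}(U=u)$ then yield the conclusion because a convex combination of values in a closed interval lies in that interval.

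The key bridge step is to show $|\Phi_j(\condsubmodelNoInput{u}, \ObsDist) - \Phi_j(\condsubmodelNoInput{u}, \CondDist{u})| \leq \maxdistshift_j$ for every $u$. Since the variable-importance functionals in scope are expectations of local scores $\phi_j$ (hence linear in the input distribution) and since $\ObsDist$ admits the mixture representation $\mathbb{P}(U=u)\CondDist{u} + \mathbb{P}(U\neq u)\mathcal{P}_{XY|U\neq u}$, linearity yields
$$\Phi_j(\condsubmodelNoInput{u}, \ObsDist) - \Phi_j(\condsubmodelNoInput{u}, \CondDist{u}) = \mathbb{P}(U\neq u)\bigl(\Phi_j(\condsubmodelNoInput{u}, \mathcal{P}_{XY|U\neq u}) - \Phi_j(\condsubmodelNoInput{u}, \CondDist{u})\bigr),$$
and Assumption~\ref{assm:l1_dist} bounds the right-hand factor by $\maxdistshift_j$, giving the desired inequality after taking absolute values.

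Finally, Theorem~\ref{thm:vi_coverage_uncond} provides, on a joint event of probability at least $1-(\delta+\gamma)$ over the sample, that every $\condsubmodelNoInput{u} \in \allcondmodels$ both lies in $\estrset(\smarteps + \maxcondloss + \suplambda)$ and satisfies $|\Phi_j(\condsubmodelNoInput{u}, \ObsDist) - \Phi_j(\condsubmodelNoInput{u}, \D)| \leq \vibound$. Membership in $\estrset$ sandwiches $\Phi_j(\condsubmodelNoInput{u}, \D)$ between $\inf_{f \in \estrset} \Phi_j(f, \D)$ and $\sup_{f \in \estrset} \Phi_j(f, \D)$; adding the $\pm \vibound$ slack, and then the $\pm\maxdistshift_j$ slack from the bridge step, places every $\Phi_j(\condsubmodelNoInput{u}, \CondDist{u})$ inside the claimed interval, and averaging against $\mathbb{P}(U=u)$ closes the argument. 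The main obstacle I expect is the bridge step, because Assumption~\ref{assm:l1_dist} controls the $\CondDist{u}$-vs-$\mathcal{P}_{XY|U\neq u}$ gap rather than the $\CondDist{u}$-vs-$\ObsDist$ gap that is actually needed; reconciling the two uses linearity of the VI functional in the distribution, which is not stated as an explicit assumption but holds for the expectation-based metrics (e.g., subtractive model reliance) targeted by the paper.
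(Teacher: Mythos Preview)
Your proposal is correct and follows essentially the same route as the paper. The paper isolates your convex-combination argument and bridge step into a standalone deterministic lemma (showing $\Phi_j(\optmodel,\UnobsDist)\in[\min_{u}\Phi_j(\condsubmodelNoInput{u},\ObsDist)-\maxdistshift_j,\ \max_{u}\Phi_j(\condsubmodelNoInput{u},\ObsDist)+\maxdistshift_j]$ via exactly the mixture identity you wrote), and then invokes Theorem~\ref{thm:vi_coverage_uncond} to replace the $\ObsDist$-quantities by empirical Rashomon-set bounds; your ``obstacle'' about linearity is a non-issue since $\Phi_j(f,\mathcal{P})=\mathbb{E}_{\mathcal{P}}[\phi_j(f,\cdot)]$ by definition, which is precisely how the paper justifies the same decomposition.
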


\begin{figure*}[h!]
    \centering
    \includegraphics[width=0.8\linewidth]{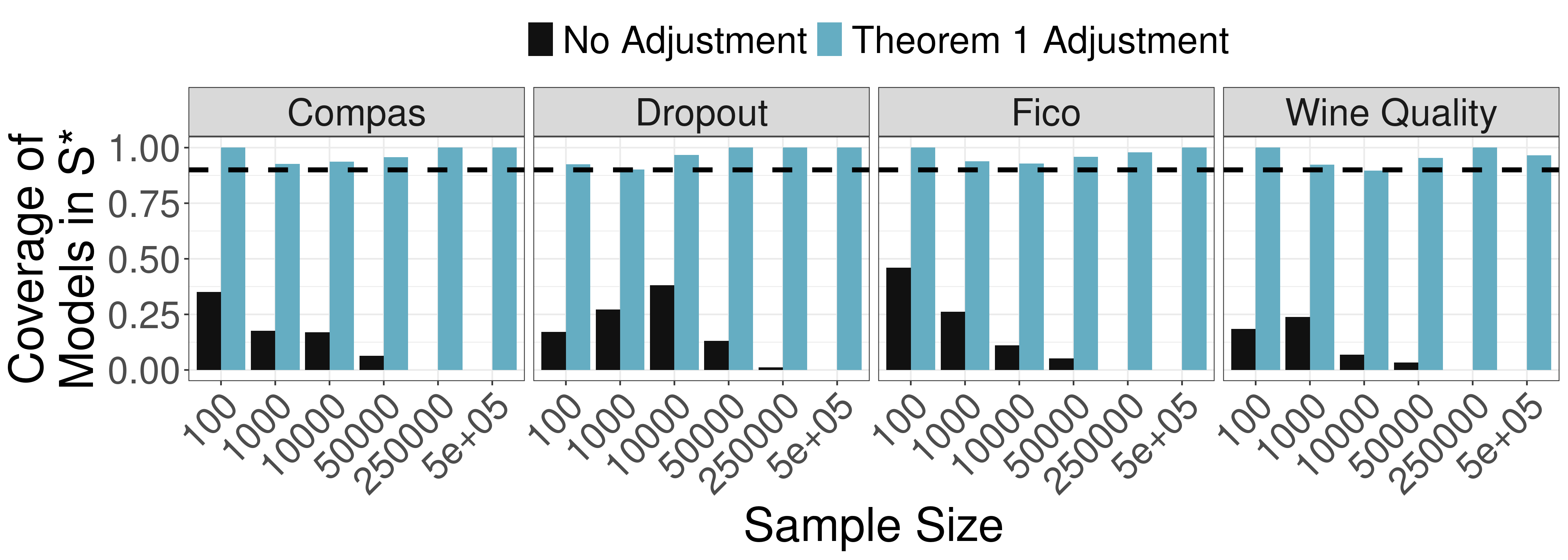}
    \caption{Verifying Theorem~\ref{thm:smart_eps} in finite sample datasets. We compute the proportion of 100 random draws of the each dataset in which Rashomon sets estimated with the Rashomon threshold adjusting for finite sample biases as in Theorem \ref{thm:smart_eps} (in blue) and without any adjustment (in black) captures each $\condsubmodelNoInput{u}$ for each setting. The target coverage rate is $\geq 0.9$, with $\delta=0.1$. Across all sample sizes and datasets, omitting finite sample adjustments yields Rashomon sets that leave out necessary models. In contrast, our adjustment yields the target coverage rate, verifying the theorem holds.  
    We use the estimated Rashomon set size as our upper bound on the size of $\allcondmodels$. } \label{fig:submodel_capture_important_unobs}
\end{figure*}

In Theorem~\ref{thm:optmodel_coverage}, the scalar $\maxdistshift_j$ considers corrections to the model-level variable importance estimate due to differences in distribution. 
If there exists no difference between the distributions (implying that $\maxdistshift_j = 0$), then the variable importance computed for $\condsubmodelNoInput{u}$ on the observed data will be a consistent estimate for $\Phi(\condsubmodelNoInput{u}, \ObsDist) = \Phi(\condsubmodelNoInput{u}, \CondDist{u})$ because $\ObsDist = \CondDist{u},$ and we do not need to perform any correction; if the distributions are very different, $\Phi(\condsubmodelNoInput{u}, \ObsDist)$ could be very different.
Theorem~\ref{thm:optmodel_coverage} combines the finite-sample Rashomon set estimation correction from Theorem~\ref{thm:smart_eps}, the finite-sample variable importance estimation correction from Theorem~\ref{thm:vi_coverage_uncond}, and finally introduces a distribution shift correction. In doing so, \textbf{we guarantee that our bounds contain the true variable importance even with omitted variables with high probability.} Figure \ref{fig:vi_by_loss} (discussed later) contains an example of how this analysis may work in practice.

\section{Experiments} \label{sec:experiments}
\subsection{Semi-synthetic Experiments}
\label{subsec:semisynthetic}

\begin{figure*}[h!]
    \centering
    \includegraphics[width=0.8\linewidth]{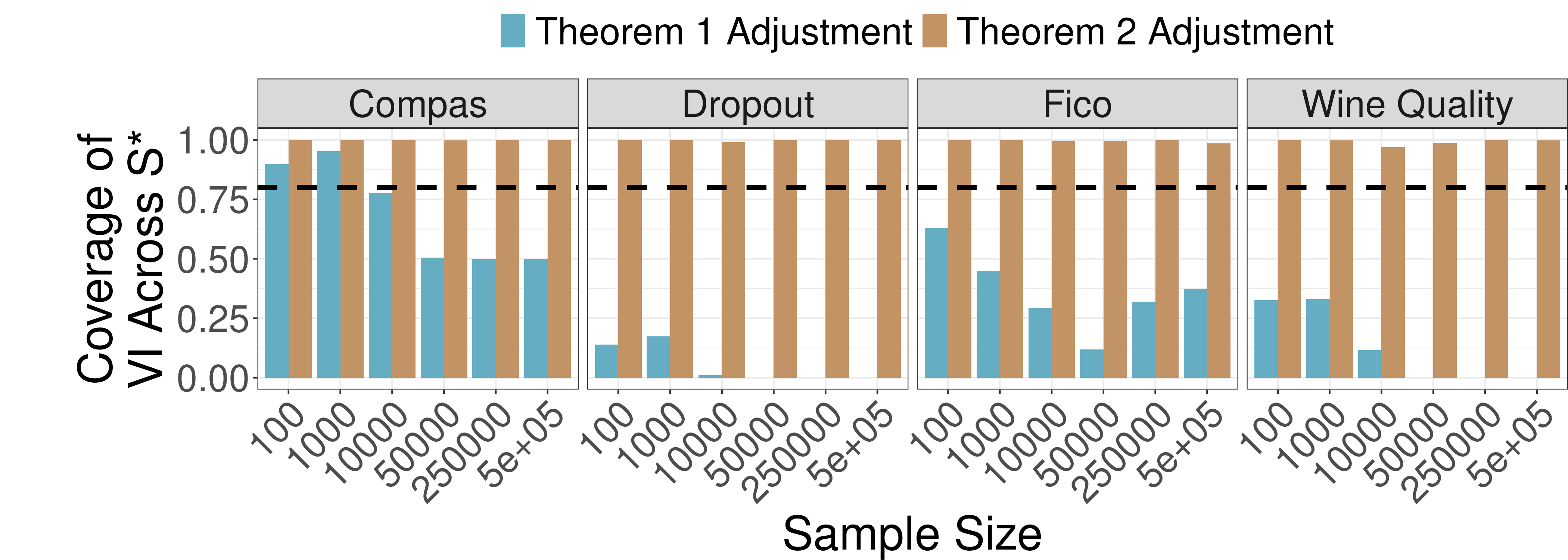}
    \caption{Verifying Theorem~\ref{thm:vi_coverage_uncond}.
    We achieve the specified coverage rate of $\geq0.8$ only when adjusting for (i) model uncertainty via Theorem~\ref{thm:smart_eps} and (ii) variable importance estimation uncertainty (in gold). Adjusting for model uncertainty alone (in blue) is not sufficient. For each setting, we compute the proportion of 100 experiments where our variable importance bounds capture the true variable importance for all submodels $f_u \in \allcondmodels$, averaged over variables. We use an estimate of the true Rashomon set size as our upper bound on the size of $\allcondmodels$.}
    \label{fig:vi_coverage_uncond}
\end{figure*}

We now turn to evaluate each of our primary theoretical claims empirically. Because we cannot know the true variable importance in real observational data, we use ``semi-synthetic'' datasets generated as follows.
Given a tabular dataset $\mathcal{D}^{(n)}$, we first split the dataset into $K$ equally sized partitions $\mathcal{D}^{(n /K)}_1, \mathcal{D}^{(n /K)}_2, \hdots \mathcal{D}^{(n /K)}_K$. 
We then fit a decision tree classifier on each partition sequentially, yielding $K$ distinct models $\condsubmodelNoInput{1}, \condsubmodelNoInput{2}, \hdots \condsubmodelNoInput{K}$. 
In order to ensure that these models are distinct, when fitting the $k$-th tree, we set all entries of each feature used by the previously fitted models $\condsubmodelNoInput{1}, \condsubmodelNoInput{2}, \hdots \condsubmodelNoInput{k-1}$
equal to 0. 
For each partition, we create a semi-synthetic dataset where each label is replaced by the prediction of the corresponding model, i.e., $\bar{\mathcal{D}}^{(n/K)}_k := \{(X_i, \condsubmodel{k}{X_i})\}^{k n/K}_{i=(k-1)n/K}$.
Finally, we combine the semi-synthetic datasets back into one.
In doing so, we treat the partition to which each sample was assigned as an important unobserved feature $U \in \mathcal{U} := \{1, 2, \hdots, K\}$; if the $i$-th sample was assigned to partition $k$, we say $u_i = k$.
This yields a known true conditional mean function $\optmodel(x_i, u_i) := \condsubmodel{u_i}{x_i}, u_i \in \{1, 2, \hdots, K\}$. 
This setup allows us to know each ground truth quantity of interest ($\optmodel$ and $\allcondmodels$ with their corresponding variable importance values, $\maxcondloss$) while working with a fairly realistic distribution for $X$. 

In the following experiments, we apply this procedure to four disparate real-world datasets with $K=2$:
(1) Compas \citep{propublica}, which concerns criminal recidivism prediction for 6,907 individuals;
(2) Dropout \citep{martins2021early}, which concerns predicting whether students will drop out of college for 4,424 individuals;
(3) FICO \citep{competition}, which concerns predicting whether an individual will repay line of credit within 2 years for 10,459 individuals;
(4) Wine Quality \citep{cortez2009modeling}, which concerns predicting wine quality ratings over 6,497 wines.

For each dataset, we evaluate our framework using random draws with replacement of sample sizes 100; 1,000; 10,000; 50,000; 250,000; and 500,000 from $\bar{\mathcal{D}}^{(n)}$, each over 100 iterations. 
We consider only the first 8 features (except Compas, which has 7 features) from each dataset to enable fast computation of the Rashomon set.
In each setting, we consider the model class $\mathcal{F}$ of depth $3$ or less decision trees found using TreeFarms \cite{xin2022exploring}, and similarly restrict each $\condsubmodelNoInput{k}$ to have a depth of at most $3$. To faithfully evaluate the theoretical claims above, we use the true value for $\maxcondloss$ and $\maxdistshift_j$ in each experiment.

\begin{figure*}[h!]
    \centering
    \includegraphics[width=0.8\linewidth]{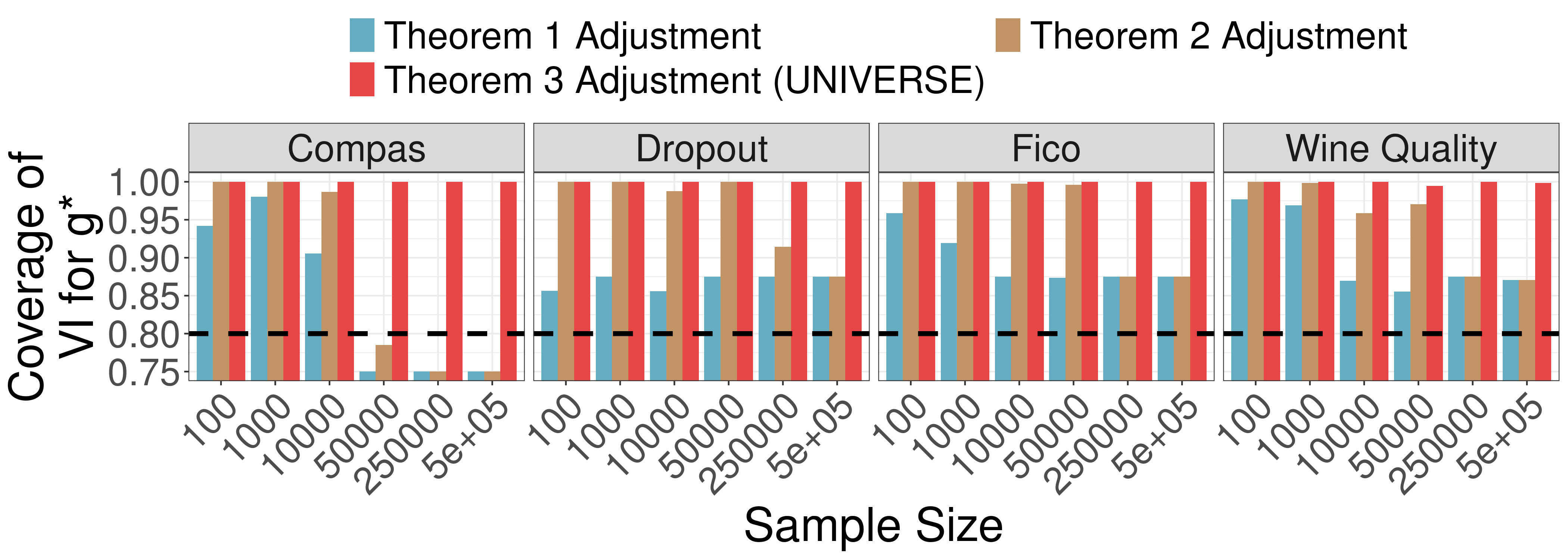}
    \caption{Verifying Theorem~\ref{thm:optmodel_coverage}.
    We consistently achieve the specified coverage rate of $\geq 0.8$ only when we account for (i) model uncertainty, (ii) variable importance uncertainty, and (iii) VI drift. 
    Each bar measures the proportion of 100 experiments in which our bounds capture the true variable importance for the true model $\optmodel$. Plots are colored such that blue only accounts for finite sample model uncertainty as in Theorem~\ref{thm:smart_eps}, gold also adjusts for uncertainty in estimating subtractive model reliance (MR) at the model-level as in Theorem~\ref{thm:vi_coverage_uncond}, and red adjusts for the previous two \textit{and} distribution shifts induced by omitted variables \ref{thm:optmodel_coverage}. All three adjustments are necessary to achieve the target coverage rate of $\geq 0.8$, with $\delta=\gamma=0.1$.}
    \label{fig:dgp_coverage}
\end{figure*}
\paragraph{Finding each model in $\allcondmodels$} 

Corollary \ref{cor:s_coverage} demonstrates how we may select a value $\varepsilon_n$ such that, with high probability, the empirical Rashomon set contains $\allcondmodels$. We first evaluate the validity of this Corollary with a target confidence of $1 - \delta = 0.9$.

Figure \ref{fig:submodel_capture_important_unobs} presents the results of this evaluation. We find that, across all six sample sizes and all four reference datasets, omitting finite sample and regularization adjustments yields Rashomon sets that leave out necessary models. In contrast, \textbf{our empirical Rashomon set based on Corollary \ref{cor:s_coverage} contains every model in $\allcondmodels$ at above the specified rate}. These results demonstrate that adjusting for finite sample and regularization biases is necessary for constructing high-probability, finite-sample covers of the population Rashomon sets. Even at large sample sizes, not adjusting for these factors yields \textit{under} coverage because of the regularization that existing algorithms necessitate to compute Rashomon sets. Specifically, $\maxcondloss$ is defined as the maximum expected \textit{unregularized} loss for any model in $\allcondmodels$. As such, the expected \textit{regularized} loss for the highest-loss model in $\allcondmodels$ will exceed $\maxcondloss$ and the highest-loss model is therefore consistently omitted from the empirical Rashomon set. 

\paragraph{Recovering variable importance for each model in $\allcondmodels$} We now turn to evaluate Theorem~\ref{thm:vi_coverage_uncond}: given a finite sample bound for the estimation of our variable importance metric for a specified model, we can cover the importance of each variable to each model in $\allcondmodels$ with at least a specified probability. In this experiment, we set our target probability to $1-(\delta + \gamma) = 0.8$, and apply the finite sample bound for subtractive model reliance (MR) from Equation B.26 of \citet{fisher2019all}.
Figure \ref{fig:vi_coverage_uncond} demonstrates that adjusting only for model uncertainty as in Corollary~\ref{thm:smart_eps} yields invalid bounds on model-level variable importance. In fact, for sample sizes larger than 10,000 observations, omitting variable importance uncertainty quantification yields intervals that \textit{never} contain the true variable importance for all conditional submodels. In contrast, 
\textbf{across all four datasets and at all sample sizes, our approach achieves nominal coverage.}

\paragraph{Recovering variable importance for the $\optmodel$ with unobserved features} Finally, we evaluate Theorem \ref{thm:optmodel_coverage}, which provides high probability bounds on variable importance to the true model $\optmodel$, even with unobserved variables. We again set our target coverage rate to $1-(\delta + \gamma) = 0.8$ and apply the finite sample bound for subtractive MR \citet{fisher2019all}. This reflects applying the entire \ourmethodAcronym{} framework.

Figure \ref{fig:dgp_coverage} demonstrates that \textbf{we consistently cover the true importance to $\optmodel$ in more than the specified $80\%$ of cases across all four datasets}, even though $\optmodel$ depends on unobserved variables. Moreover, Figure \ref{fig:dgp_coverage} demonstrates that each component of Theorem \ref{thm:optmodel_coverage} is necessary to achieve this coverage rate. Without these adjustments, our bounds would substantially undercover the true variable importance, as highlighted on the Compas dataset.






\subsection{Case Study}
\label{subsec:real_data}
We use our framework to study credit risk assessment using a dataset developed by the Fair Isaac Corporation (FICO), focusing on the most powerful predictive feature: \texttt{External Risk Estimate} (ERE).
ERE is a risk score developed by external agencies like FICO for which banks often pay when evaluating credit applications. 
After controlling for other observed features, ERE remains important to \textit{every} nearly-optimal model, as shown in Figure~\ref{fig:real_intervals_intro}. 
However, the FICO dataset does not share information about features that are likely predictive of loan default, like income. 
If these unmeasured features could easily explain the information in ERE, banks could collect alternative 
features that would better serve customers. \ourmethodAcronym~allows analysts to answer the following: how much signal would another feature need to capture to replace ERE?

We use the Rashomon set of sparse decision trees with 0-1 loss and the same hyperparameters as in Section~\ref{sec:experiments} to compute \ourmethodAcronym.
This is an appropriate model class because sparse decision trees have achieved similar performance to more complex model classes like boosted trees on this dataset \citep{rudin2024amazing}.

Figure~\ref{fig:vi_by_loss} displays the results of our analyses. The x-axis displays the Rashomon threshold we used for estimating Rashomon sets (adjusting for finite sample and regularization biases as in Theorem~\ref{thm:smart_eps}), with larger x-values reflecting a large belief about the value of $\maxcondloss$. The y-axis displays the lower bound on subtractive model reliance accounting for finite sample uncertainty and distribution shifts. Each color displays results under different amounts of assumed VI drift, $\maxdistshift_j$.

Setting $\maxdistshift_j=0$ means that conditioning on an unobserved variable does not change the distributions of ERE or outcome in a way that affects variable importance.
Even under this assumption, we find that we would only need to identify a feature whose conditional sub-models achieve a 0-1 loss of $\maxcondloss=0.32$ (which corresponds to an accuracy of 68\%) to find a model that does not depend on ERE; our current best model achieves a loss of 0.29 (accuracy of 71\%). 
This accuracy requirement becomes even smaller as we allow $\maxdistshift_j$ to increase, suggesting that ERE's importance is not robust to a moderate degree of unobserved confounding. 
If we depended only on the observed features, an analyst may conclude that ERE is really important and must always be considered to predict credit risk. 
However, our analysis suggests that other, more interpretable features could easily replace ERE, enabling the bank to give more meaningful feedback to loan applicants.

\begin{figure}[h]
    \centering
    \includegraphics[width=0.95\linewidth]{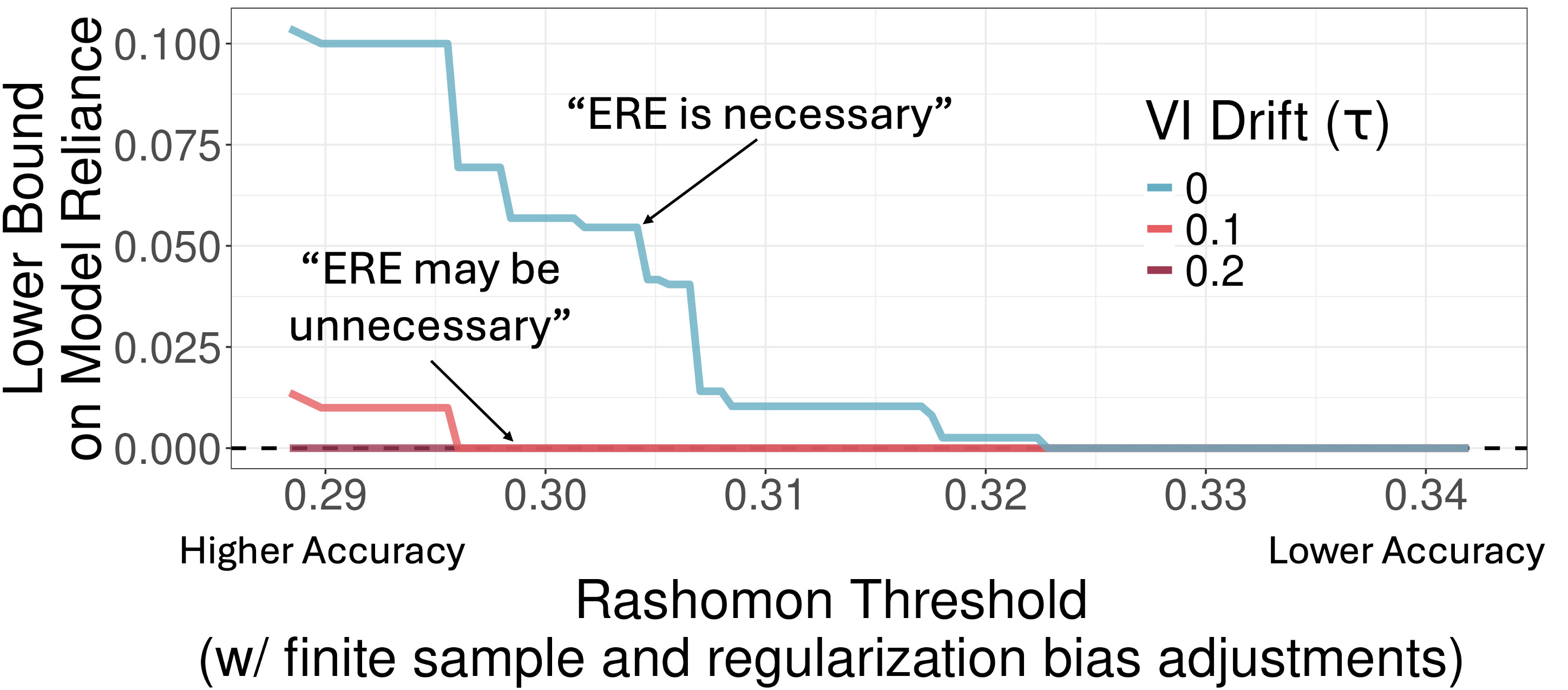}
    \caption{An example of how our tool may be used in practice. The x-axis displays the Rashomon threshold used for estimating Rashomon sets (adjusting for finite sample and regularization biases as in Theorem~\ref{thm:smart_eps}). The y-axis displays the range of subtractive model reliance (MR) we compute adjusting for model uncertainty, variable importance estimation uncertainty, and distribution shifts as in Theorem~\ref{thm:optmodel_coverage} as we vary the Rashomon threshold. Each color shows the range as the assumed amount of variable importance drift from distribution shift $\maxdistshift_j$ increases. The x-axis begins at the lowest achievable loss in the dataset.}
    \label{fig:vi_by_loss}
\end{figure}

\section{Conclusion} \label{sec:conclusion}

In this work, we introduced \ourmethodAcronym, the first variable importance method to account for finite sample concerns, the Rashomon effect, and unobserved variables. We proved both theoretically and empirically that \ourmethodAcronym{} can recover the true variable importance to the true underlying conditional mean function, even in the presence of these sources of error.


The \ourmethodAcronym{} framework is general and applicable to \textit{any} model class. However, finding the complete Rashomon set is often computationally intensive. Future work should extend our framework to settings in which we \textit{approximate} the complete Rashomon set of more complex model classes like neural networks \citep{donnelly2025rashomon} or deep decision trees \citep{babbar2025near}.
Additionally, we develop \ourmethodAcronym{} for classification problems, but many problems require modeling more complex regression \citep{zhang2023optimal}, survival \citep{zhang2024optimal}, or distributional outcomes\citep{katta2024interpretable}. 
applied practitioners may benefit from generalizations of \ourmethodAcronym{}. 
Nonetheless, \ourmethodAcronym{} represents a substantial step in using VI in challenging, realistic settings with unobserved confounders.

\section{Acknowledgements} \label{sec:acknowledgements}
Support from  Bocconi Senior Researchers’ Grant, 2024, 603020 is gratefully acknowledged by Emanuele Borgonovo. Srikar Katta is supported by the Apple Scholar in AI/ML Fellowship. This material is based upon work supported by the National Science Foundation Graduate Research Fellowship under grant number DGE 2139754, as well as National Institutes of Health/NIDA grant number R01DA054994.

\bibliographystyle{apalike}
\bibliography{references}

\appendix
\onecolumn
\aistatstitle{Doctor Rashomon and the UNIVERSE of Madness: Variable Importance with Unobserved Confounding and the Rashomon Effect (Appendix)}

\section{Notation/Definitions}


\begin{table}[h]
    \centering
    \begin{tabularx}{\textwidth}{c|X}
        Notation & Definition \\
        \hline 
        $\D$ & The observed dataset of $n$ samples\\
        $\obs$ & Observed covariates for a single observation \\
        $\unobs$ & Unobserved covariates for a single observation \\
        $y$ & The outcome of interest \\
        $\optmodel$ & The conditional mean function of $y$ given \textit{all} covariates\\
        $\condsubmodelNoInput{u}$ & The conditional mean function of \textit{observed} variables given the fixed values of the \textit{unobserved} variables to $u$\\
        $\allcondmodels$ & The set of all ground-truth conditional submodels \\
        $\poprset$ & The population rashomon set \\
        $\estrset$ & The rashomon set estimated from $n$ observations\\
        $\Xspace$ & The space of \textit{observed} covariates \\
        $\Uspace$ & The space of \textit{unobserved} covariates \\
        $\Yspace$ & The space of outcomes \\
        $\maxcondloss$ & The (assumed) maximum expected loss over conditional submodels when applied to the full population\\
        $\delta$ & The allowed probability that a model in the population Rashomon set is not in our estimated Rashomon set \\
        $\smarteps$ & Finite sample epsilon correction to insure our estimated Rashmon Set contains the population Rashomon set with high probability\\
        $\gamma$ & The allowed probability that the estimated variable importance for a model is outisde our predicted bounds on variable importance \\
        $\suplambda$ & Worst case regularization possible from model class \\
        $\maxdistshift$ & The maximum shift in variable importance over conditional submodels due to conditioning on a given value of the unobserved covariates \\
        $\maxdistshift_j$ & The maximum shift in variable importance over conditional submodels due to conditioning on a given value of the unobserved covariates \\
        $\phi_j$ & The variable importance for variable $j$, with respect to a single sample \\
        $\Phi_j$ & The expectation of $\phi_j$ over samples from the data distribution \\
        \hline
    \end{tabularx}
    \caption{A brief summary of the notation used in this work.}
    \label{tab:my_label}
\end{table}

\newpage
\section{Motivational Example}
\label{app:motivational_example}
In this section, we provide a brief example where unobserved confounding leads to misleading results. Let $X_1, X_2, X_3, U \sim \text{Bernoulli}(p=0.5),$ and our label be generated as:
\begin{align*}
    Y = \begin{cases}
        X_1 \text{ XOR } X_2 & \text{if } U =0 \\
        X_1 \text{ XOR } X_3 & \text{otherwise } 
    \end{cases}
\end{align*}

Let our model class of interest $\mathcal{F}$ consist of universal approximators, and the algorithm considered $\mathcal{A}$ always produce an optimal model for the given data. We will consider 0-1 loss (or, equivalently in this case, MSE).

If we observe $X_1, X_2, X_3,$ and allow $n \to \infty,$ note that there are four settings of $X_1, X_2, X_3$ which contain two distinct values for $Y$. Thus, the Bayes' optimal accuracy over this data is $75\%,$ and any model that achieves this accuracy is optimal. Two such models are:
\begin{align*}
    f_1(X_1, X_2, X_3) = X_1 \text{ XOR } X_2\\
    f_2(X_1, X_2, X_3) = X_1 \text{ XOR } X_3
\end{align*}

Thus, $\mathcal{A}(X_1, X_2, X_3)$ may produce either of $f_1, f_2$. We therefore have that
\begin{align*}
    LOCO(X_2) &= \ell(\mathcal{A}(X_1, X_3, Y)) - \ell(\mathcal{A}(X_1, X_2, X_3, Y))\\
    &= \ell(f_2) - \ell(f_1)\\
    &= 0
\end{align*}
and
\begin{align*}
    LOCO(X_3) &= \ell(\mathcal{A}(X_1, X_2, Y)) - \ell(\mathcal{A}(X_1, X_2, X_3, Y))\\
    &= \ell(f_1) - \ell(f_2)\\
    &= 0,
\end{align*}

even though $X_2$ and $X_3$ \textit{are} used by the DGP. In fact, a wide class of measures of statistical association demonstrate a similar behavior because \textit{our outcome is marginally independent of each variable}. By the definition of conditional probability, we have:



\begin{align*}
    P(Y = y| X_1 = x_1) &= \frac{P(Y = y, X_1 = x)}{P(X_1 = x)}\\
    &= \frac{4 / 16}{8/16} \\
    &= P(Y = y)\\
    P(Y = y| X_2 = x_2) &= \frac{P(Y = y, X_2 = x_2)}{P(X_2 = x_2)}\\
    &= \frac{4 / 16}{8/16}\\ 
    &= P(Y = y)\\
    P(Y = y| X_3 = x_3) &= \frac{P(Y = y, X_3 = x_3)}{P(X_3 = x)}\\
    &= \frac{4 / 16}{8/16} \\
    &= P(Y = y)
\end{align*}

Consider an arbitrary measure of statistical association $M$ satisfying the ``zero-independence'' property (i.e., $M(X, Y) = 0$ if and only if $X \perp Y$). Because $X_2 \perp Y,$ $X_2 \perp Y,$ and $X_3 \perp Y,$ we have that $M(X_1, Y) = M(X_2, Y) = M(X_3, Y) = 0.$ 

\newpage
\section{Elaboration on Assumption~\ref{assm:l1_dist}}
\label{app:assm_elaboration}
\setcounter{assumption}{1}
In the main paper, we introduced the following assumption:
\begin{assumption}
    Assume that there exists a known $\maxdistshift_j$ such that, for all $u \in \Uspace$, 
    $$
    \left|
        \Phi_j(\condsubmodelNoInput{u}, \CondDist{u}) - \Phi_j(\condsubmodelNoInput{u}, \mathcal{P}_{XY|U \neq u})
    \right| \leq \maxdistshift_j.
    $$
\end{assumption}
In this section, we elaborate on the meaning of the term $|\Phi_j(\condsubmodelNoInput{u}, \CondDist{u}) - \Phi_j(\condsubmodelNoInput{u}, \mathcal{P}_{XY|U \neq u})|$ through an example. Consider the data distribution reflected in Table \ref{tab:assm_2_example} with each row of the table having equal probability of being drawn. In Table \ref{tab:assm_2_example}, the first four rows have $U=0$, and as such $Y = f_0(X) := X_1$. In the second four, we have $U=1$ and $Y = f_1(X):= X_1 \neq X_2$. In this setting, $U$ is unobserved. Because there are only two values for $U$ to take, we will compute quantities for $\CondDist{0}$ using the first four rows, and $\mathcal{P}_{XY|U \neq 0}$ using the second four.

We will step through computing the subtractive model reliance and 0-1 loss for $f_0$. First, by definition, we know
\begin{align*}
    \Phi_1(f_0, \CondDist{u}) 
    &= \mathbb{E}_{\CondDist{u}}[\mathbb{E}_{X_{i'} \sim \mathcal{P}_X}[\ell(f_0, swap_1(X_i, X_{i'}), Y_i)]] - \mathbb{E}_{\CondDist{u}}[\ell(f_0,X_i,Y_i)], 
\end{align*}
where $swap_j(A, B)$ is a function replacing the $j$-th entry of $A$ with the $j$-th entry of $B$. Because we make perfect predictions for the subpopulation with $U = 0$ using the model $f_0$, we know that $\mathbb{E}_{\CondDist{u}}[\ell(f_0,X_i,Y_i)] = 0$ and $\Phi_1(f_0, \CondDist{u}) = \mathbb{E}_{\CondDist{u}}[\mathbb{E}_{X_{i'} \sim \mathcal{P}_X}[\ell(f_0, swap_1(X_i, X_{i'}), Y_i)]].$

The term $\mathbb{E}_{\CondDist{u}}[\mathbb{E}_{X_{i'} \sim \mathcal{P}_X}[\ell(f_0, swap_1(X_i, X_{i'}), Y_i)]]$ is slightly onerous to compute by hand, but can be intuited from the construction of the problem. Within $\CondDist{u}$, there is equal probability that $X_1 = 0$ and $X_1 = 1$. Because $f_0$ simply returns the value of $X_1$, drawing a random value $X'_1$ from the marginal distribution of $X_1$ will have equal likelihood of producing $f_0(X')=0$ and $f_0(X')=1$. Thus, given the fact that $\mathbb{P}(Y=0)=\mathbb{P}(Y=1)=0.5,$ we have $\mathbb{E}_{\CondDist{u}}[\mathbb{E}_{X_{i'} \sim \mathcal{P}_X}[\ell(f_0, swap_1(X_i, X_{i'}), Y_i)]] = 0.5$, yielding $\Phi_1(f_0, \CondDist{u}) = 0.5$

We now turn to compute $\Phi_j(\condsubmodelNoInput{u}, \mathcal{P}_{XY|U \neq u})$. In this case, we consider the bottom four rows of Table \ref{tab:assm_2_example}, yielding:
\begin{align*}
    \Phi_j(\condsubmodelNoInput{u}, \mathcal{P}_{XY|U \neq u}) 
    &= \mathbb{E}_{\mathcal{P}_{XY|U \neq u}}[\mathbb{E}_{X_{i'} \sim \mathcal{P}_X}[\ell(f_0, swap_1(X_i, X_{i'}), Y_i)]] - \mathbb{E}_{\mathcal{P}_{XY|U \neq u}}[\ell(f_0,X_i,Y_i)] \\
    &= \mathbb{E}_{\mathcal{P}_{XY|U \neq u}}[\mathbb{E}_{X_{i'} \sim \mathcal{P}_X}[\ell(f_0, swap_1(X_i, X_{i'}), Y_i)]] - 0.5
\end{align*}
The term $\mathbb{E}_{\mathcal{P}_{XY|U \neq u}}[\mathbb{E}_{X_{i'} \sim \mathcal{P}_X}[\ell(f_0, swap_1(X_i, X_{i'}), Y_i)]]$ is similarly annoying to compute by hand, but can be intuited from the construction of the problem by the exact same reasoning. Within $\mathcal{P}_{XY|U \neq u}$, there is equal probability that $X_1 = 0$ and $X_1 = 1$. Because $f_0$ simply returns the value of $X_1$, drawing a random value $X'_1$ from the marginal distribution of $X_1$ will have equal likelihood of producing $f_0(X')=0$ and $f_0(X')=1$. Thus, given the fact that $\mathbb{P}(Y=0)=\mathbb{P}(Y=1)=0.5,$ we have $\mathbb{E}_{\mathcal{P}_{XY|U \neq u}}[\mathbb{E}_{x_{i'} \sim \mathcal{P}_X}[\ell(f_0, swap_1(x_i, x_{i'}), y_i)]] = 0.5$. However, in this case the initial loss for $f_0$ was poorer; as such, we have $\Phi_1(f_0, \mathcal{P}_{XY|U \neq u}) = 0.5 - 0.5 = 0$.

The shift in the distribution of $Y$ given $U = 1$ vs $U = 0$ changes the importance of $X_1$ for the conditional submodel $f_0$. When this effect happens, $\tau_j$ is large, because it measures the change in variable importance for conditional submodels across different subgroups of unobserved features. Similarly, if the distribution of $X_1$ changed substantially between conditions, we would expect $\tau_j$ to be larger.

\begin{table}[]
    \centering
    \begin{tabular}{c|c|c|c|c|c}
        $X_1$ & $X_2$ & $U$ & $f_0(X)$ & $f_1(X)$ & Y\\
        \hline
        \hline
        0 & 0 & 0 & 0 & 0 & 0\\
        1 & 0 & 0 & 1 & 1 & 1\\
        0 & 1 & 0 & 0 & 1 & 0\\
        1 & 1 & 0 & 1 & 0 & 1\\
        \hline
        0 & 0 & 1 & 0 & 0 & 0\\
        1 & 0 & 1 & 1 & 1 & 1\\
        0 & 1 & 1 & 0 & 1 & 1\\
        1 & 1 & 1 & 1 & 0 & 0
    \end{tabular}
    \caption{The full distribution of data used for the example below. Each row in this table has equal probability of occurring. In this example, $f_0(X) := X_1$ and $f_1(X):= X_1 \neq X_2$}
    \label{tab:assm_2_example}
\end{table}

\newpage
\section{Applicable Variable Importance Metrics}
\label{app:applicable_vi_metrics}
Our framework primarily considers variable importance metrics that can be expressed as the expectation of some sample-wise variable importance quantity. Here, we list two example variable metrics that fit this definition.

\paragraph{Subtractive Model Reliance} is defined by \cite{fisher2019all}  as 
\begin{align*}
    \Phi_j(f, \ObsDist) = \mathbb{E}_{\ObsDist}[\mathbb{E}_{x_{i'} \sim \mathcal{P}_X}[\ell(f, swap_j(x_i, x_{i'}), y_i)]] - \mathbb{E}_{\ObsDist}[\ell(f,x_i,y_i)]
\end{align*}
where 
$$swap_j(x_i, x_{i'}):= \begin{bmatrix}
    x_{i,1} & x_{i,2} & \hdots x_{i',j}  & \hdots x_{i,p}
\end{bmatrix}^T$$
is a function that replaces the $j-$th entry of $x_i$ with that of $x_{i'}$.
We can express this as the expectation of a sample-wise variable importance quantity as follows
\begin{align*}
    \Phi_j(f, \ObsDist) &= \mathbb{E}_{\ObsDist}[\mathbb{E}_{x_{i'} \sim \mathcal{P}_X}[\ell(f, swap_j(x_i, x_{i'}), y_i)]] - \mathbb{E}_{\ObsDist}[\ell(f,x_i,y_i)]\\
    &= \mathbb{E}_{\ObsDist}[\mathbb{E}_{x_{i'} \sim \mathcal{P}_X}[\ell(f, swap_j(x_i, x_{i'}), y_i)] - \ell(f,x_i,y_i)]\\
    &= \mathbb{E}_{\ObsDist}\left[\phi_j(f, x_i, y_i)\right]
\end{align*}
for $\phi_j(f, x_i, y_i):=\mathbb{E}_{x_{i'} \sim \mathcal{P}_X}[\ell(f, swap_j(x_i, x_{i'}), y_i)] - \ell(f,x_i,y_i) $.

\paragraph{SHAP} is defined by \cite{lundberg2017unified} as a sample-wise variable importance metric. Let $\phi_j^{SHAP}(f, x_i, y_i)$ denote the SHAP importance for feature $j$ on sample $i$. In the SHAP package, global importance values are obtained by taking the mean absolute value over $i$; that is, $\Phi_j(f, \mathcal{D}):=\mathbb{E}[\vert \phi_j^{SHAP}(f, x_i, y_i)\vert ].$ Similar reasoning applies to a wide range of extensions of SHAP.

\paragraph{Average Integrated Gradients} Suppose our function class is composed of only differentiable functions $\mathcal{F}.$ Choose some $f \in \mathcal{F}.$ As defined in \citet{sundararajan2017axiomatic}, for a given sample $(x_i, y_i)$, its integrated gradients is computed as
\begin{equation*}
    \phi_j(f, (x_i, y_i)) = (x_i - x_i'))\int_0^1 \frac{\partial f(x_i + \alpha(x_i - x_i')}{dx_i}d\alpha
\end{equation*}
for some alternative covariate profile $x_i'.$ To measure the importance of only feature $j$, we can define $x_{i,j}' = x_{i,j} + \Delta$ for some fixed $\Delta$ and keep all other features $x_{i,j'}' = x_{i,j}$ fixed.
We can now measure the average importance of feature $j$ as an expectation of sample-level integrated gradients over the empirical or population distribution.

\newpage
\section{Proofs}

\setcounter{proposition}{0}
\setcounter{theorem}{0}
\setcounter{corollary}{0}
\begin{proposition}
\label{supp_prop:poprset_coverage}
    Under Assumption \ref{assm:bounded_loss}, we know that $\allcondmodels \subseteq \poprset(\maxcondloss; \lambda_0).$
\end{proposition}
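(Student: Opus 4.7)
The plan is to prove set inclusion directly by unwinding the definitions. I would pick an arbitrary element $f_u \in \allcondmodels$ and verify that it satisfies the two conditions defining membership in $\poprset(\maxcondloss; \lambda_0)$: (i) it lies in the hypothesis class $\mathcal{F}$, and (ii) its population expected loss (plus regularization) does not exceed the threshold $\maxcondloss$.

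For (i), I would invoke the hypothesis $\allcondmodels \subseteq \mathcal{F}$ stated in the theorem, which gives $f_u \in \mathcal{F}$ immediately. For (ii), I would combine Assumption~\ref{assm:bounded_loss}, which asserts
\[
\mathbb{E}_{(\Obs, Y) \sim \ObsDist}[\ell(\condsubmodelNoInput{\unobs}, \Obs, Y)] \leq \maxcondloss,
\]
with the defining property of the null regularizer, $\lambda_0(f) = 0$ for every $f \in \mathcal{F}$. Adding $\lambda_0(f_u) = 0$ to both sides preserves the inequality, yielding
\[
\mathbb{E}_{(\Obs, Y) \sim \ObsDist}[\ell(f_u, \Obs, Y)] + \lambda_0(f_u) \leq \maxcondloss,
\]
which is exactly the condition for $f_u \in \poprset(\maxcondloss; \lambda_0)$. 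Since $f_u$ was arbitrary, the inclusion $\allcondmodels \subseteq \poprset(\maxcondloss; \lambda_0)$ follows.

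There is no serious obstacle here, since the statement is essentially a definitional unpacking: Assumption~\ref{assm:bounded_loss} was engineered so that $\maxcondloss$ is precisely an upper bound on the unregularized population loss of every conditional submodel, and using $\lambda_0$ ensures that no regularization slack eats into the Rashomon budget. The only subtlety worth flagging is that the proposition relies on taking $\lambda = \lambda_0$ rather than the penalized $\lambda$ used by practical Rashomon-set algorithms; this gap is what Theorem~\ref{thm:smart_eps} later closes by absorbing $\suplambda$ into the threshold. I would note this in the proof so the reader sees why the subsequent finite-sample result needs an additional correction term.
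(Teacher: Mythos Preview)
Your proposal is correct and matches the paper's approach: both argue directly from the definition of $\poprset(\maxcondloss; \lambda_0)$ together with Assumption~\ref{assm:bounded_loss}. Your write-up is in fact more explicit than the paper's one-line proof, as you spell out the two membership conditions and the role of $\lambda_0$, but the underlying argument is identical.
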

\begin{proof}
    Assumption \ref{assm:bounded_loss} states that, for some $\maxcondloss > 0$,
\begin{align}
    \mathbb{E}_{(\Obs, Y) \sim \ObsDist} [\ell(\condsubmodelNoInput{\unobs}, \Obs, Y)] \leq \maxcondloss & \quad \forall \condsubmodelNoInput{u} \in \allcondmodels.
\end{align}
The Rashomon set with regularization $\lambda_0$ is defined as 
\begin{align*}
    \poprset(\epsilon; \mathcal{F}, \ell, \lambda_0, \ObsDist) := 
    \{f \in \mathcal{F} :
    \mathbb{E}_{(\Obs, Y) \sim \ObsDist}[\ell(f, \Obs, Y; \lambda) + 0\leq \epsilon] \}.
\end{align*}
Thus, $\allcondmodels \subseteq \poprset(\maxcondloss; \lambda_0)$ by the definition of $\poprset(\maxcondloss; \lambda_0)$.
\end{proof}

\begin{theorem}[Recovering the Population Rashomon Set]
\label{app-thm:smart_eps}
Let $\delta \in (0, 1)$ denote a desired type-1 error rate.
     For any loss function $\ell$ bounded between $\ell_{\min}$ and $\ell_{\max}$ and $\epsilon \in [\ell_{\min}, \ell_{\max}]$, it holds that 
 \begin{align*}
    &\mathbb{P}\left(\poprset(\epsilon;\mathcal{F}, \ell,\lambda) \subseteq \estrset\left(\smarteps + \epsilon + \lambda_{\sup};\mathcal{F}, \ell, \lambda\right)\right) \geq 1 - \delta,\\
 &\text{where\;\;}\smarteps =  \sqrt{\frac{(\ell_{\max}-\ell_{\min})^2\ln\left(\frac{C}{\delta} \right)}{2n}}
 \end{align*}
 for sample size $n$, for any value $C \geq |\poprset(\maxcondloss;\mathcal{F}, \ell, \lambda)|$ (e.g., $C:=|\mathcal{F}|$), regularization penalty $\lambda$, and regularization upper bound $\suplambda = \sup_{f \in \mathcal{F}} \lambda(f).$ 
\end{theorem}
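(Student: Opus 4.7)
The plan is to show that any fixed model in $\poprset(\epsilon;\mathcal{F},\ell,\lambda)$ transfers to $\estrset(\smarteps + \epsilon + \suplambda;\mathcal{F},\ell,\lambda)$ via a Hoeffding concentration bound, and then absorb the model-by-model failure probabilities through a union bound over the population Rashomon set, whose cardinality is at most $C$.

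\textbf{Step 1 (pointwise concentration).} Fix $f \in \poprset(\epsilon;\mathcal{F},\ell,\lambda)$. The per-sample losses $\ell(f,X_i,Y_i)$ are i.i.d.\ and bounded in $[\ell_{\min},\ell_{\max}]$, so Hoeffding's inequality gives
\begin{align*}
\mathbb{P}\!\left( \tfrac{1}{n}\sum_{i=1}^{n}\ell(f,X_i,Y_i) - \mathbb{E}_{\ObsDist}[\ell(f,X,Y)] \;>\; t \right) \;\leq\; \exp\!\left(-\tfrac{2nt^{2}}{(\ell_{\max}-\ell_{\min})^{2}}\right).
\end{align*}
Choosing $t = \smarteps$ makes the right-hand side exactly $\delta/C$.

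\textbf{Step 2 (union bound).} I will apply a union bound across all members of the population Rashomon set. Since $|\poprset(\epsilon;\mathcal{F},\ell,\lambda)| \leq |\poprset(\maxcondloss;\mathcal{F},\ell,\lambda)| \leq C$ whenever $\epsilon \leq \maxcondloss$ (and more generally the proof only requires $C$ to upper bound $|\poprset(\epsilon;\mathcal{F},\ell,\lambda)|$, which the hypothesis of the theorem guarantees by taking $C \geq |\mathcal{F}|$ if needed), we obtain with probability at least $1-\delta$ that, simultaneously for every $f \in \poprset(\epsilon;\mathcal{F},\ell,\lambda)$,
\begin{align*}
\tfrac{1}{n}\sum_{i=1}^{n}\ell(f,X_i,Y_i) \;\leq\; \mathbb{E}_{\ObsDist}[\ell(f,X,Y)] + \smarteps.
\end{align*}

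\textbf{Step 3 (translate to Rashomon membership).} For any $f \in \poprset(\epsilon;\mathcal{F},\ell,\lambda)$, the defining inequality of the population Rashomon set yields $\mathbb{E}_{\ObsDist}[\ell(f,X,Y)] \leq \epsilon - \lambda(f)$. Combining with Step 2, on the high-probability event,
\begin{align*}
\tfrac{1}{n}\sum_{i=1}^{n}\ell(f,X_i,Y_i) + \lambda(f) \;\leq\; \epsilon + \smarteps \;\leq\; \epsilon + \smarteps + \suplambda,
\end{align*}
where the last inequality is the (possibly loose) slack the theorem states explicitly, useful when the same result is invoked with a different regularizer on the two sides (e.g., $\lambda_0$ on the population side and a non-trivial $\lambda$ on the empirical side, as in Corollary~\ref{cor:s_coverage}). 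This places $f$ in $\estrset(\smarteps + \epsilon + \suplambda;\mathcal{F},\ell,\lambda)$, completing the inclusion.

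\textbf{Main obstacle.} The proof itself is a clean Hoeffding + union bound, and the only real delicacy is the bookkeeping around $C$: the union bound is legitimate only if $C$ upper bounds the number of models being quantified over, and the statement implicitly requires $C \geq |\poprset(\epsilon;\mathcal{F},\ell,\lambda)|$. I would note explicitly that the hypothesis ``$C \geq |\poprset(\maxcondloss;\mathcal{F},\ell,\lambda)|$'' (together with $\epsilon \leq \maxcondloss$, or in the worst case taking $C = |\mathcal{F}|$) suffices, and point out that the extra $\suplambda$ term is slack that the theorem carries for downstream use in Corollary~\ref{cor:s_coverage}, where the population Rashomon set is defined with null regularization $\lambda_0$ but the empirical one inherits the algorithm's non-trivial $\lambda$.
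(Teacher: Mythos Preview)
Your proposal is correct and follows essentially the same approach as the paper: Hoeffding's inequality applied pointwise to the bounded losses, followed by a union bound over the (finite) population Rashomon set, with the algebra translating the concentration into empirical Rashomon-set membership. The paper organizes the argument via the complement event (bounding the probability of exclusion and then chaining inequalities down to a Hoeffding tail), whereas you state the positive concentration event first and then verify inclusion; the content is identical. Your observation that the $\suplambda$ term is redundant slack when the \emph{same} $\lambda$ is used on both sides---and is really there to absorb the mismatch between $\lambda_0$ on the population side and a non-trivial $\lambda$ on the empirical side in Corollary~\ref{cor:s_coverage}---is accurate and sharper than what the paper's proof makes explicit; likewise your flag that the hypothesis on $C$ as written requires $\epsilon \leq \maxcondloss$ (or the fallback $C=|\mathcal{F}|$) is a correct reading of a small imprecision in the theorem statement.
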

\begin{proof}
Throughout this proof, we denote the expected loss $\mathbb{E}_{X, Y \sim \ObsDist}\ell(f, X, Y)$ simply as $\ell(f)$ and the empirical loss $\frac{1}{n} \sum_{i=1}^n \ell(f, x_i, y_i)$ as $\hat{\ell}(f)$. All probabilities are with respect to $X, Y \sim \ObsDist$. 
First, recall that we can form the following uniform bound:
\begin{align*}
    &\mathbb{P}\left(\exists f \in \mathcal{R}(\epsilon; \mathcal{F}, \ell, \lambda) \text{ s.t. } f \notin \estrset(\epsilon + \varepsilon_n + \lambda_{sup}; \mathcal{F}, \ell, \lambda) \right)\\ 
    &= \mathbb{P}\left(\exists f \in \mathcal{R}(\epsilon) \text{ s.t. } \hat{\ell}(f) + \lambda(f) > \epsilon + \varepsilon_n + \lambda_{sup} \right) \text{ by definition of Rashomon set exclusion}  \\
    &\leq \mathbb{P}\left(\exists f \in \mathcal{R}(\epsilon) \text{ s.t. } \hat{\ell}(f) > \epsilon + \varepsilon_n \right)  \text{ because } \lambda_{sup} -\lambda(f)  \geq 0 \text{ by definition of } \lambda_{sup}\\
    &= \mathbb{P}\left(\exists f \in \mathcal{R}(\epsilon) \text{ s.t. } \hat{\ell}(f) - \ell(f) > \epsilon + \varepsilon_n - \ell(f) \right) \text{ by subtracting $\ell(f)$ from both sides}  \\
    &\leq \mathbb{P}\left(\exists f \in \mathcal{R}(\epsilon) \text{ s.t. } \hat{\ell}(f) - \ell(f) > \varepsilon_n \right) \text{ because } \epsilon - \ell(f) \geq 0 \text{ for } f \in \mathcal{R}(\epsilon)\\
    &\leq \sum_{f \in \mathcal{R}(\epsilon)}\mathbb{P}\left( \hat{\ell}(f) - \ell(f)  > \varepsilon_n \right) \text{ by the Union bound } \\
    &\leq  \sum_{f \in \mathcal{R}(\epsilon)}  \exp\left\{ \frac{-2n\varepsilon_n^2}{(b-a)^2} \right\} \text{ by Hoeffding's inequality}\\
    &= |\mathcal{R}(\epsilon)| \exp\left\{ \frac{-2n\varepsilon_n^2}{(\ell_{\text{max}}-\ell_{\text{min}})^2} \right\}\\
    &\leq C \exp\left\{ \frac{-2n\varepsilon_n^2}{(\ell_{\text{max}}-\ell_{\text{min}})^2} \right\}
\end{align*}
for any $C \geq |\mathcal{R}(\epsilon)|$. That is, if we choose parameters such that $\delta = C \exp\left\{ \frac{-2n\varepsilon_n^2}{(\ell_{\text{max}}-\ell_{\text{min}})^2}\right\},$ $\delta$ is a valid upper bound on the target quantity.
That is, we can bound the probability that we miss any models from the true Rashomon set for a given threshold. We can rearrange this inequality to compute an epsilon that will satisfy this in terms of our target probability $\delta$:

\begin{align*}
    &C \exp\left\{ \frac{-2n\varepsilon_n^2}{(\ell_{\text{max}}-\ell_{\text{min}})^2} \right\} = \delta \\
\iff & \frac{-2n\varepsilon_n^2}{(\ell_{\text{max}}-\ell_{\text{min}})^2} = \ln\left(\frac{\delta}{C} \right) \\
\iff &\varepsilon_n^2 =  \frac{(\ell_{\text{max}}-\ell_{\text{min}})^2\ln\left(\frac{\delta}{C} \right)}{-2n}\\
\iff &\varepsilon_n =  \sqrt{\frac{(\ell_{\text{max}}-\ell_{\text{min}})^2\ln\left(\frac{\delta}{C} \right)}{-2n}}\\
\iff &\varepsilon_n =  \sqrt{\frac{(\ell_{\text{max}}-\ell_{\text{min}})^2\ln\left(\frac{C}{\delta} \right)}{2n}}.
\end{align*}

Thus, we have that
\begin{align*}
    &\varepsilon_n =  \sqrt{\frac{(\ell_{\text{max}}-\ell_{\text{min}})^2\ln\left(\frac{C}{\delta} \right)}{2n}}\\
    &\implies \mathbb{P}\left(\mathcal{R}(\epsilon) \not\subset \estrset(\varepsilon_n + \epsilon + \lambda_{sup};\mathcal{F}, \ell)\right) \leq \delta\\
    &\iff \mathbb{P}\left(\mathcal{R}(\epsilon) \subset \estrset(\varepsilon_n + \epsilon + \lambda_{sup};\mathcal{F}, \ell)\right) \geq 1 - \delta,
\end{align*}
as required.
\end{proof}

\begin{corollary}[Capturing Each Unobserved Submodel]
\label{app-cor:s_coverage}
    Let $\maxcondloss$ be defined as in Assumption \ref{assm:bounded_loss}, and assume that $\allcondmodels \subseteq \mathcal{F}$. For any loss function $\ell$ bounded between $\ell_{\min}$ and $\ell_{\max}$, it holds that $\forall \delta\in(0,1)$
 \begin{align*}
     \mathbb{P}\left(\allcondmodels \subseteq \estrset(\smarteps + \maxcondloss + \lambda_{\sup}; \mathcal{F}, \ell, \lambda)\right) \geq 1 - \delta,
 \end{align*}
 for a sample of size $n$ and 
 regularization penalty $\lambda$, with $\smarteps$ defined as in Theorem \ref{thm:smart_eps}.
\end{corollary}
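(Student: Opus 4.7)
The plan is to chain the two inclusion results that have already been established. Proposition~\ref{prop:poprset_coverage} gives the deterministic inclusion $\allcondmodels \subseteq \poprset(\maxcondloss; \lambda_0)$ under Assumption~\ref{assm:bounded_loss}, while Theorem~\ref{thm:smart_eps} gives the high-probability inclusion of a population Rashomon set inside a finite-sample Rashomon set. Combining them by transitivity of set inclusion should immediately yield the corollary.

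Concretely, the steps in order are: (i) invoke Proposition~\ref{prop:poprset_coverage} to obtain $\allcondmodels \subseteq \poprset(\maxcondloss; \lambda_0)$ deterministically; (ii) apply Theorem~\ref{thm:smart_eps} (instantiated at threshold $\epsilon = \maxcondloss$) to certify that, with probability at least $1-\delta$, every model whose unregularized expected loss is at most $\maxcondloss$ also lies in $\estrset(\smarteps + \maxcondloss + \suplambda;\mathcal{F}, \ell, \lambda)$; and (iii) chain the two inclusions to conclude $\allcondmodels \subseteq \estrset(\smarteps + \maxcondloss + \suplambda; \mathcal{F}, \ell, \lambda)$ with probability at least $1-\delta$.

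The one subtlety to address is reconciling the null regularizer $\lambda_0$ appearing in Proposition~\ref{prop:poprset_coverage} with the generic regularizer $\lambda$ appearing in Theorem~\ref{thm:smart_eps}. Inspecting the proof of Theorem~\ref{thm:smart_eps}, one sees that the Hoeffding-plus-union-bound argument only uses the inequality $\mathbb{E}[\ell(f,X,Y)] \leq \epsilon$ on the unregularized expected loss of each model in the population Rashomon set; regularization enters only on the empirical side, through the additive $\suplambda$ slack that is built into the empirical threshold. Since every $f \in \poprset(\maxcondloss; \lambda_0)$ satisfies $\mathbb{E}[\ell(f,X,Y)] \leq \maxcondloss$ by definition, and $|\poprset(\maxcondloss; \lambda_0)| \leq |\mathcal{F}|$, the same argument applies verbatim with the admissible choice $C := |\mathcal{F}|$ in the definition of $\smarteps$, so no new probabilistic machinery is required. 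The main ``obstacle'' is therefore bookkeeping rather than analysis: one must be careful that the $\suplambda$ correction in the empirical threshold is exactly what absorbs the regularization bias, so that sub-models with low \emph{unregularized} population loss are not inadvertently excluded from the \emph{regularized} empirical Rashomon set.
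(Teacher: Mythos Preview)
Your proposal is correct and follows essentially the same approach as the paper: both chain the deterministic inclusion $\allcondmodels \subseteq \poprset(\maxcondloss)$ (from Assumption~\ref{assm:bounded_loss}/Proposition~\ref{prop:poprset_coverage}) with the high-probability inclusion of Theorem~\ref{thm:smart_eps}, using transitivity of set inclusion. Your explicit handling of the $\lambda_0$-versus-$\lambda$ mismatch is slightly more careful than the paper's own write-up, which glosses over this point, but the underlying argument is identical.
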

\begin{proof}
We have that:
\begin{align*}
    &\mathbb{P}\left(\allcondmodels \subseteq \estrset(\epsilon + \varepsilon_n + \lambda_{sup}; \mathcal{F}, \ell, \lambda) \right)\\
    &=1 - \mathbb{P}\left(\exists f \in \allcondmodels \text{ s.t. } f \notin \estrset(\epsilon + \varepsilon_n + \lambda_{sup}; \mathcal{F}, \ell, \lambda) \right)\\
    &\geq 
    1 - \mathbb{P}\left(\exists f \in \mathcal{R}(\epsilon; \mathcal{F}, \ell, \lambda) \notin \estrset(\epsilon + \varepsilon_n + \lambda_{sup}; \mathcal{F}, \ell, \lambda) \right) \text{ because } \ell(f) \leq \epsilon \forall f \in \allcondmodels\\
    &= \mathbb{P}\left(\mathcal{R}(\epsilon; \mathcal{F}, \ell, \lambda) \subseteq \estrset(\epsilon + \varepsilon_n + \lambda_{sup}; \mathcal{F}, \ell, \lambda) \right)\\
    &\geq 1 - \delta
\end{align*}
for $\smarteps =  \sqrt{\frac{(\ell_{\text{max}} - \ell_{\text{min}})^2\ln\left(\frac{C}{\delta} \right)}{2n}}$ by Theorem \ref{app-thm:smart_eps}.

\end{proof}

\newpage
\begin{proposition}
\label{prop:rset_size_mse}
The size of the estimated Rashomon set with threshold $\epsilon' > \epsilon$ is an upper bound on the size of the population Rashomon set with threshold $\epsilon$ with high probability:
\begin{align*}
    \mathbb{P}_{(\Obs, Y) \sim \ObsDist}\left( |\estrset(\epsilon')| > |\poprset(\epsilon)| \right) = 1 - O(n^{-1}).
\end{align*}
\end{proposition}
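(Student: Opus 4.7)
The plan is to deduce the statement from a set-containment argument: if $\poprset(\epsilon) \subseteq \estrset(\epsilon')$ holds with probability $1 - O(n^{-1})$, then $|\estrset(\epsilon')| \geq |\poprset(\epsilon)|$ on that event, which is essentially the claim (reading the ``$>$'' in the statement as the containment-induced ``$\geq$''). So the task reduces to showing uniform-over-$\poprset(\epsilon)$ transfer of models into the empirical Rashomon set at the slightly inflated threshold $\epsilon'$.

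First I would fix $f \in \poprset(\epsilon)$. By definition $\mathbb{E}_{(\Obs,Y) \sim \ObsDist}[\ell(f, \Obs, Y)] + \lambda(f) \leq \epsilon$, so the event $\{f \notin \estrset(\epsilon')\}$ is contained in the event $\{\hat{\ell}(f) - \mathbb{E}[\ell(f, \Obs, Y)] > \epsilon' - \epsilon\}$, where $\hat{\ell}(f) := \frac{1}{n}\sum_{i=1}^n \ell(f,\obs_i,y_i)$. Since $\hat{\ell}(f)$ is an average of $n$ i.i.d.\ samples, it has variance $\varlossf / n$, and Chebyshev's inequality yields $\mathbb{P}(f \notin \estrset(\epsilon')) \leq \varlossf / [n(\epsilon'-\epsilon)^2]$, giving the $O(n^{-1})$ rate per model. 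I would then take a union bound over $f \in \poprset(\epsilon)$, upper bounding each per-model variance by a uniform $\ubvarloss$ (finite because the loss is bounded by $[\ell_{\min}, \ell_{\max}]$ in this paper's setup). Because $|\poprset(\epsilon)|$ is a constant that does not depend on $n$, this yields $\mathbb{P}(\poprset(\epsilon) \not\subseteq \estrset(\epsilon')) \leq |\poprset(\epsilon)|\, \ubvarloss / [n(\epsilon'-\epsilon)^2] = O(n^{-1})$, delivering the claim via the containment-to-counting reduction from the first paragraph.

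The main obstacle is matching the $O(n^{-1})$ rate rather than the exponential decay used in the proof of Theorem \ref{thm:smart_eps}: a Hoeffding-style bound would give tighter (exponential) control, but the polynomial rate stated here is most cleanly obtained via a second-moment argument, which requires exhibiting a uniform variance bound $\ubvarloss$ across $\poprset(\epsilon)$ rather than just a range bound. A secondary subtlety is that the argument only produces $|\estrset(\epsilon')| \geq |\poprset(\epsilon)|$, not a strict inequality; I would interpret the statement's ``$>$'' as this containment-driven bound, since no step in the proof forces an additional model into $\estrset(\epsilon')$ beyond those already transferred from $\poprset(\epsilon)$.
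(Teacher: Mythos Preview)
Your proposal is correct and follows essentially the same route as the paper: containment $\poprset(\epsilon) \subseteq \estrset(\epsilon')$, then a union bound over $f \in \poprset(\epsilon)$, then Chebyshev per model using the bounded-loss variance bound (the paper makes the constant explicit as $(\ell_{\max}-\ell_{\min})^2/4$, but your generic $\ubvarloss$ is equivalent). Your observation about the strict-versus-nonstrict inequality is apt; the paper's proof also only delivers $|\estrset(\epsilon')| \geq |\poprset(\epsilon)|$ via containment and does not address the gap.
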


\begin{proof}
    Recall how we define our estimated and population Rashomon sets:
    \begin{align*}
    \poprset(\epsilon; \mathcal{F}, \ell, \lambda) &=\{f \in \mathcal{F} :
    \mathbb{E}_{(\Obs, Y) \sim \ObsDist}[\ell(f, \Obs, Y; \lambda) + \lambda(f)] \leq \epsilon \}
    \\
        \estrset(\epsilon'; \mathcal{F}, \ell, \lambda) &= \left\{f \in \mathcal{F} : \frac{1}{n} \sum_{i=1}^n\ell(f, \obs_i, y_i) + \lambda(f) \leq \epsilon' \right\}.
    \end{align*}
Before moving further, recognize that if all models in $\poprset(\epsilon)$ are members of the estimated Rashomon set $\estrset(\epsilon')$ (i.e., $\poprset(\epsilon) \subseteq \estrset(\epsilon')$, then $|\estrset(\epsilon')| > |\poprset(\epsilon)|$). Therefore, we will find a lower bound on the probability of $|\estrset(\epsilon')| > |\poprset(\epsilon)|$ and show that this goes to 1 asymptotically.
\begin{align*}
    &\mathbb{P}_{(\Obs, Y) \sim \ObsDist}\left( |\estrset(\epsilon')| > |\poprset(\epsilon)| \right) \\
    &\geq \mathbb{P}_{(\Obs, Y) \sim \ObsDist}\left(  \poprset(\epsilon)  \subseteq \estrset(\epsilon')  \right) \text{ (from above discussion)}\\
    &= \mathbb{P}_{(\Obs, Y) \sim \ObsDist}(\forall f \in \poprset(\epsilon), f \in \estrset(\epsilon')) \text{ (by definition of $\subseteq$)}\\
    &= 1 - \mathbb{P}_{(\Obs, Y) \sim \ObsDist}(\exists f \in \poprset(\epsilon), f \notin \estrset(\epsilon')) \text{ (by law of total probability)}\\
    &\geq 1 - \sum_{f \in \poprset(\epsilon)}\mathbb{P}_{(\Obs, Y) \sim \ObsDist}( f \notin \estrset(\epsilon') ) \text{ (by Union bound)}.
\end{align*}

Let us work on bounding this model-level probability. First, recall when a model is in the estimated and population Rashomon sets: when the empirical average or expectation over sample-level losses is below the specified threshold. So, we now want to bound for a given model $f$ in the population $\epsilon$-threshold Rashomon set,
\begin{align*}
    &\mathbb{P}_{(\Obs, Y) \sim \ObsDist}\left( f \notin \estrset(\epsilon') \right) \\
    &=\mathbb{P}_{(\Obs, Y) \sim \ObsDist}\left( \frac{1}{n}\sum_{i = 1}^n \ell(f, X_i, Y_i; \lambda) + \lambda(f) > \epsilon' \right) \\
&=\mathbb{P}_{(\Obs, Y) \sim \ObsDist}\left( \frac{1}{n}\sum_{i = 1}^n \ell(f, X_i, Y_i; \lambda) - \mathbb{E}_{(\Obs, Y) \sim \ObsDist}[\ell(f, X, Y; \lambda)] > \epsilon' - \mathbb{E}_{(\Obs, Y) \sim \ObsDist}[\ell(f, X, Y; \lambda)] \right) \\
&\leq \frac{\mathbb{E}_{(\Obs, Y) \sim \ObsDist}\left[\left( \frac{1}{n}\sum_{i = 1}^n \ell(f, X_i, Y_i; \lambda) - \mathbb{E}_{(\Obs, Y) \sim \ObsDist}[\ell(f, X, Y; \lambda)]\right)^2\right]}{\left(\epsilon' - \mathbb{E}_{(\Obs, Y) \sim \ObsDist}[\ell(f, X, Y; \lambda)] \right)^2} \text{ (by Chebyshev's inequality)} \\
&= \frac{\sum_{i = 1}^n\mathbb{E}_{(\Obs, Y) \sim \ObsDist}\left[\left( \ell(f, X_i, Y_i; \lambda) - \mathbb{E}_{(\Obs, Y) \sim \ObsDist}[\ell(f, X, Y; \lambda)]\right)^2\right]}{n^2 \left(\epsilon' - \mathbb{E}_{(\Obs, Y) \sim \ObsDist}[\ell(f, X, Y; \lambda)] \right)^2} \text{ (because data are IID)} \\
&\leq \frac{\sum_{i = 1}^n (\ell_{\max} - \ell_{\min})^2}{4n^2 \left(\epsilon' - \mathbb{E}_{(\Obs, Y) \sim \ObsDist}[\ell(f, X, Y; \lambda)] \right)^2}  \text{ (because range of loss is bounded)} \\
&= \frac{ (\ell_{\max} - \ell_{\min})^2}{4n \left(\epsilon' - \mathbb{E}_{(\Obs, Y) \sim \ObsDist}[\ell(f, X, Y; \lambda)] \right)^2}  \text{ (by canceling out constant factors)} \\
&\leq \frac{(\ell_{\max} - \ell_{\min})^2}{4n \left(\epsilon' - \epsilon \right)^2}  \text{ (because expected loss of $f$ is smaller than $\epsilon$)}.
\end{align*}

Plugging this last result back into the Union bound across all models, we can see then that
\begin{align*}
    \mathbb{P}_{(\Obs, Y) \sim \ObsDist}\left( |\estrset(\epsilon')| > |\poprset(\epsilon)| \right)
    &\geq 1 - \sum_{f \in \poprset(\epsilon)}  \frac{(\ell_{\max} - \ell_{\min})^2}{4n \left(\epsilon' - \epsilon \right)^2} \\
    &= 1 - |\poprset(\epsilon)|\frac{ (\ell_{\max} - \ell_{\min})^2}{4n \left(\epsilon' - \epsilon \right)^2} \\ 
    &= 1 - O(n^{-1}) \text{ (because all other factors are constants)}
\end{align*}

\end{proof} 

\begin{corollary} \label{app-cor:asymp_rset_conv}
    Let $\maxcondloss$ be defined as in Assumption \ref{assm:bounded_loss}, and assume that $\allcondmodels \subseteq \mathcal{F}$. Define 
    \begin{align*}
         \estsmarteps = \sqrt{\frac{(\ell_{\max}-\ell_{\min})^2\ln\left(\frac{|\estrset(\epsilon' + \lambda_{\sup})|}{\delta} \right)}{2n}},
        \end{align*}
    using the size of an estimated Rashomon set with threshold $\epsilon' > \maxcondloss$. Then, 
    for any loss function $\ell$ bounded between $\ell_{\min}$ and $\ell_{\max}$, it holds that 
 \begin{align*}
     \mathbb{P}\left(\allcondmodels \subset \estrset(\estsmarteps + \maxcondloss + \lambda_{\sup}; \mathcal{F}, \ell, \lambda)\right) \geq 1 - \delta + O(n^{-1}),
 \end{align*}
 given a sample of size $n$ and 
 regularization penalty $\lambda$.
\end{corollary}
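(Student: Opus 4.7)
The plan is to derive this corollary by combining Corollary~\ref{app-cor:s_coverage} (which assumes a \emph{deterministic} valid upper bound $C$ on $|\poprset(\maxcondloss)|$) with Proposition~\ref{prop:rset_size_mse} (which guarantees that a suitable empirical Rashomon-set size dominates the population size with probability $1-O(n^{-1})$). The subtlety is that $\estsmarteps$ is now data-dependent, so one cannot simply substitute $\hat{C} := |\estrset(\epsilon'+\lambda_{\sup})|$ into Theorem~\ref{app-thm:smart_eps}; instead I will dominate the random $\estsmarteps$ by a deterministic quantity on a high-probability event.

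Concretely, first I define the deterministic analogue $C^{*} := |\poprset(\maxcondloss;\mathcal{F},\ell,\lambda)|$ and the corresponding deterministic threshold correction
\begin{equation*}
\smarteps^{*} \;:=\; \sqrt{\tfrac{(\ell_{\max}-\ell_{\min})^{2}\ln(C^{*}/\delta)}{2n}}.
\end{equation*}
Corollary~\ref{app-cor:s_coverage} applied with $C=C^{*}$ gives $\mathbb{P}\bigl(\allcondmodels \subseteq \estrset(\smarteps^{*}+\maxcondloss+\lambda_{\sup})\bigr) \ge 1-\delta$. Next, because $\epsilon' > \maxcondloss$ implies $\epsilon'+\lambda_{\sup} > \maxcondloss$, Proposition~\ref{prop:rset_size_mse} (with its $\epsilon'$ taken to be $\epsilon'+\lambda_{\sup}$ and its $\epsilon$ taken to be $\maxcondloss$) yields
\begin{equation*}
\mathbb{P}(E) \;\ge\; 1 - O(n^{-1}), \qquad E := \bigl\{\, \hat{C} \;\ge\; C^{*} \,\bigr\}.
\end{equation*}

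Now I exploit monotonicity. Since the logarithm is increasing, on the event $E$ we have $\estsmarteps \ge \smarteps^{*}$, and since $\estrset(\cdot)$ is monotone in its threshold, this gives the set inclusion $\estrset(\smarteps^{*}+\maxcondloss+\lambda_{\sup}) \subseteq \estrset(\estsmarteps+\maxcondloss+\lambda_{\sup})$. Consequently the event $A := \{\allcondmodels \subseteq \estrset(\smarteps^{*}+\maxcondloss+\lambda_{\sup})\}$ intersected with $E$ is contained in the target event $B := \{\allcondmodels \subseteq \estrset(\estsmarteps+\maxcondloss+\lambda_{\sup})\}$. A standard inclusion-exclusion step then gives
\begin{equation*}
\mathbb{P}(B) \;\ge\; \mathbb{P}(A \cap E) \;\ge\; \mathbb{P}(A) + \mathbb{P}(E) - 1 \;\ge\; (1-\delta) + (1 - O(n^{-1})) - 1 \;=\; 1 - \delta - O(n^{-1}),
\end{equation*}
matching the claimed rate (up to the sign on the lower-order term, which is absorbed into the $O(n^{-1})$ notation).

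The main obstacle I anticipate is precisely the one just handled: $\hat{C}$ is a function of the same sample used to build $\estrset(\estsmarteps+\maxcondloss+\lambda_{\sup})$, so a naive ``plug $\hat{C}$ into Theorem~\ref{app-thm:smart_eps}'' argument would implicitly need a uniform-in-$C$ statement. My workaround sidesteps this by never invoking Theorem~\ref{app-thm:smart_eps} with a random $C$: I only invoke it once with the deterministic $C^{*}$, and then use pure monotonicity on the event $E$ to transfer the guarantee to the data-dependent threshold $\estsmarteps$. Everything else (the regularization correction $\lambda_{\sup}$ and the Hoeffding-derived $\smarteps^{*}$) is inherited verbatim from the earlier results, so no additional concentration argument is required.
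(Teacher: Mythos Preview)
Your proposal is correct and uses the same two building blocks as the paper (Corollary~\ref{app-cor:s_coverage} with the deterministic $C^{*}=|\poprset(\maxcondloss)|$, together with Proposition~\ref{prop:rset_size_mse}). The only difference is in how the two pieces are combined: the paper writes
\[
\mathbb{P}(B)\;\ge\;\mathbb{P}(B\mid E)\,\mathbb{P}(E)\;\ge\;(1-\delta)\bigl(1-O(n^{-1})\bigr)=1-\delta-O(n^{-1}),
\]
whereas you use the additive bound $\mathbb{P}(B)\ge\mathbb{P}(A\cap E)\ge\mathbb{P}(A)+\mathbb{P}(E)-1$. Your route is slightly more careful: the paper's step $\mathbb{P}(B\mid E)\ge 1-\delta$ tacitly relies on $\mathbb{P}(A\mid E)\ge 1-\delta$, but Theorem~\ref{app-thm:smart_eps} only delivers the \emph{unconditional} bound $\mathbb{P}(A)\ge 1-\delta$, and $A$ and $E$ depend on the same sample. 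By invoking Theorem~\ref{app-thm:smart_eps} only once with a deterministic $C^{*}$ and then using monotonicity of $\estrset(\cdot)$ on $E$, you sidestep this conditioning issue cleanly; the paper's conditioning argument reaches the same numerical bound but leaves that dependence implicit. Your remark that the sign discrepancy with the stated $1-\delta+O(n^{-1})$ is absorbed into the $O(\cdot)$ notation is also the correct reading.
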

\begin{proof}
    The proof connects Corollary~\ref{cor:s_coverage} with Proposition~\ref{prop:rset_size_mse}:
    \begin{align*}
         &\mathbb{P}\left(\allcondmodels \subset \estrset(\estsmarteps + \maxcondloss + \lambda_{\sup}; \mathcal{F}, \ell, \lambda)\right) \\
         &\geq  \mathbb{P}\left(\allcondmodels \subset \estrset(\estsmarteps + \maxcondloss + \lambda_{\sup}; \mathcal{F}, \ell, \lambda) \hspace{2px} \big| \hspace{2px} |\estrset(\epsilon' + \lambda_{\sup})| > |\poprset(\maxcondloss)| \right)\mathbb{P}\left(|\estrset(\epsilon' + \lambda_{\sup})| > |\poprset(\maxcondloss)| \right) \\
         &\geq (1 - \delta)\left(1 -  O(n^{-1})\right) \text{ (from Theorem~\ref{thm:smart_eps} and Proposition~\ref{prop:rset_size_mse})}\\
         &= 1 - \delta - (1 - \delta)O(n^{-1}) \\
         &= 1 - \delta - O(n^{-1}).
    \end{align*}
\end{proof}

\newpage

\begin{theorem}[Capturing Variable Importance for Each Submodel]
\label{app-thm:vi_coverage_uncond}
    Let $\gamma \in (0, 1)$ and $\vibound$ be a value such that, 
    $$\mathbb{P}\left(\forall f \in \allcondmodels, \Phi_j(f, \ObsDist) \in \left[ \Phi_j(f, \D) \pm \vibound \right]\right) \geq 1 - \gamma,$$
    where $\mathbb{P}$ denotes the probability of drawing the observed $n$ samples from $\mathcal{P}_{XY}$. It follows that, for all $f \in \allcondmodels,$ with probability at least $1 - (\delta + \gamma),$
    \begin{align*}
     \left\{\Phi_j(f, \ObsDist) \mid \condsubmodelNoInput{} \in \allcondmodels \right\} \subseteq
    \begin{bmatrix} 
     &\inf_{f' \in \estrset(\smarteps + \maxcondloss + \lambda_{\sup})}\Phi_j(f', \D) - \vibound, \\
     &\sup_{f' \in \estrset(\smarteps + \maxcondloss + \lambda_{\sup})}\Phi_j(f', \D) + \vibound
     \end{bmatrix},
    \end{align*}
    where $\smarteps$ and $\delta$ are defined as in Corollary \ref{cor:s_coverage}. 
\end{theorem}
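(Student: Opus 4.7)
The plan is to chain together two high-probability events: the Rashomon-set coverage guarantee from Corollary~\ref{cor:s_coverage} and the hypothesized variable importance finite-sample bound, and then use a union bound to obtain the $1-(\delta+\gamma)$ guarantee.

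First I would define two events. Let $A$ be the event that $\allcondmodels \subseteq \estrset(\smarteps + \maxcondloss + \suplambda)$, which by Corollary~\ref{cor:s_coverage} satisfies $\mathbb{P}(A) \geq 1 - \delta$. Let $B$ be the event that $\Phi_j(f, \ObsDist) \in [\Phi_j(f, \D) - \vibound, \Phi_j(f, \D) + \vibound]$ simultaneously for every $f \in \allcondmodels$, which by hypothesis satisfies $\mathbb{P}(B) \geq 1 - \gamma$. A straightforward union bound then gives $\mathbb{P}(A \cap B) \geq 1 - (\delta + \gamma)$.

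Next I would show that on the event $A \cap B$, the desired inclusion holds deterministically. Fix an arbitrary $f \in \allcondmodels$. From event $B$, we have $\Phi_j(f, \ObsDist) \leq \Phi_j(f, \D) + \vibound$ and $\Phi_j(f, \ObsDist) \geq \Phi_j(f, \D) - \vibound$. From event $A$, $f \in \estrset(\smarteps + \maxcondloss + \suplambda)$, so by definition of infimum and supremum,
\begin{align*}
\inf_{f' \in \estrset(\smarteps + \maxcondloss + \suplambda)} \Phi_j(f', \D) &\leq \Phi_j(f, \D) \leq \sup_{f' \in \estrset(\smarteps + \maxcondloss + \suplambda)} \Phi_j(f', \D).
\end{align*}
Combining these two chains of inequalities yields
\begin{align*}
\inf_{f' \in \estrset(\smarteps + \maxcondloss + \suplambda)} \Phi_j(f', \D) - \vibound &\leq \Phi_j(f, \ObsDist) \leq \sup_{f' \in \estrset(\smarteps + \maxcondloss + \suplambda)} \Phi_j(f', \D) + \vibound,
\end{align*}
which is exactly the claimed interval containment. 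Since $f$ was arbitrary in $\allcondmodels$, the entire set $\{\Phi_j(f, \ObsDist) : f \in \allcondmodels\}$ lies in the interval.

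I do not expect a major obstacle: the argument is essentially just a union bound plus monotonicity of $\inf/\sup$ over a superset. The only subtle point worth being careful about is ensuring that the finite-sample VI bound $\vibound$ in the hypothesis is taken to hold \emph{uniformly} over $f \in \allcondmodels$ (not pointwise for a single model), which is already stated in the theorem. If the available concentration result for a particular VI metric (e.g., subtractive model reliance from \citet{fisher2019all}) is pointwise, one would need a further union bound over $|\allcondmodels|$ models, but that inflation is absorbed into $\vibound$ before invoking the theorem, so it does not affect the present proof.
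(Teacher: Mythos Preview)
Your proposal is correct and follows essentially the same approach as the paper: define the two events (Rashomon-set coverage from Corollary~\ref{cor:s_coverage} and the uniform VI concentration bound), apply a union bound to get probability at least $1-(\delta+\gamma)$ on their intersection, and then verify the deterministic interval inclusion on that intersection via the $\inf/\sup$ monotonicity argument. Your presentation is slightly cleaner in naming the events explicitly, but the logic is identical.
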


\begin{proof}
    \begin{align*}
        &\mathbb{P}\left(\left\{\Phi_j(f, \ObsDist) \mid \condsubmodelNoInput{} \in \allcondmodels \right\} \subseteq \left[ \inf_{f' \in \estrset(\smarteps + \maxcondloss + \lambda_{\sup})} \Phi_j(f', \D) - \alpha, \sup_{f' \in \estrset(\smarteps + \maxcondloss + \lambda_{\sup})} \Phi_j(f', \D) +\alpha \right]\right)\\
        &\geq \mathbb{P}\bigg(
            \left( 
                \allcondmodels \subseteq \estrset(\smarteps + \maxcondloss + \lambda_{\sup})
            \right)\cap 
            \left( 
                \Phi_j(\condsubmodelNoInput{u}, \ObsDist) \in \left[ \Phi_j(\condsubmodelNoInput{u}, \D) - \alpha,  \Phi_j(\condsubmodelNoInput{u}, \D) +\alpha \right] \forall \condsubmodelNoInput{u} \in \allcondmodels
            \right)
        \bigg)\\
        &\text{ Because $\allcondmodels \subseteq \estrset$ and the true VI for each $f_u$ contained in the interval is a sufficient condition for above}\\
        &= 1 - \mathbb{P}\left(
            \left( 
                \allcondmodels \not\subseteq \estrset(\smarteps + \maxcondloss + \lambda_{\sup})
            \right) \cup 
            \left(  \exists \condsubmodelNoInput{u} \in \allcondmodels \text{ s.t. }
                \Phi_j(\condsubmodelNoInput{u}, \ObsDist) \notin \left[ \Phi_j(\condsubmodelNoInput{u}, \D) - \alpha,  \Phi_j(\condsubmodelNoInput{u}, \D) +\alpha \right]
            \right)
        \right)\\
        &\text{By the law of total probability} \\
        &\geq 1 - \mathbb{P}\left(
            \allcondmodels \not\subseteq \estrset(\smarteps + \maxcondloss + \lambda_{\sup})
        \right) -
        \mathbb{P} \left( \exists \condsubmodelNoInput{u} \in \allcondmodels \text{ s.t. } 
                \Phi_j(\condsubmodelNoInput{u}, \ObsDist) \notin \left[ \Phi_j(\condsubmodelNoInput{u}, \D) - \alpha,  \Phi_j(\condsubmodelNoInput{u}, \D) +\alpha \right]
            \right)\\
        &\text{ By the Union Bound}\\
        &= \mathbb{P}\left(
            \allcondmodels \subseteq \estrset(\smarteps + \maxcondloss + \lambda_{\sup})
        \right) -
        \mathbb{P} \left( \exists \condsubmodelNoInput{u} \in \allcondmodels \text{ s.t. } 
                \Phi_j(\condsubmodelNoInput{u}, \ObsDist) \notin \left[ \Phi_j(\condsubmodelNoInput{u}, \D) - \alpha,  \Phi_j(\condsubmodelNoInput{u}, \D) + \alpha \right]
            \right)\\
        &\geq 1 - \delta -
        \mathbb{P} \left( \exists \condsubmodelNoInput{u} \in \allcondmodels \text{ s.t. }
                \Phi_j(\condsubmodelNoInput{u}, \ObsDist) \notin \left[ \Phi_j(\condsubmodelNoInput{u}, \D) - \alpha,  \Phi_j(\condsubmodelNoInput{u}, \D) + \alpha \right]
            \right) \text{ By Corollary \ref{app-cor:s_coverage}}\\
        &\geq 1 - \delta -
        \gamma \text{ By definition of } \alpha
    \end{align*}

That is,
\begin{align*}
    &\mathbb{P}\left(\left\{\Phi_j(f, \ObsDist) \mid \condsubmodelNoInput{} \in \allcondmodels \right\} \subseteq \left[ \sup_{f' \in \estrset(\smarteps + \maxcondloss + \lambda_{\sup})} \Phi_j(f', \D) - \alpha, \inf_{f' \in \estrset(\smarteps + \maxcondloss + \lambda_{\sup})} \Phi_j(f', \D) +\alpha \right]\right)\\
    &\geq 1 - (\delta +
        \gamma)
\end{align*}
as required.

\end{proof}

\newpage

\begin{lemma}
\label{app-lem:optmodel_coverage}
Let $\maxdistshift_j$ be defined as in Assumption~\ref{assm:l1_dist}. It holds that
$$
    \Phi_j(\optmodel, \UnobsDist) \in \left[\min_{\condsubmodelNoInput{u}}\Phi_j(\condsubmodelNoInput{u}, \ObsDist) - \maxdistshift_j, \max_{\condsubmodelNoInput{u}}\Phi_j(\condsubmodelNoInput{u}, \ObsDist) + \maxdistshift_j\right]
$$
    
\end{lemma}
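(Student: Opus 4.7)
The plan is to express $\Phi_j(\optmodel, \UnobsDist)$ as a convex combination of the quantities $\Phi_j(\condsubmodelNoInput{u}, \CondDist{u})$ via iterated expectation, then bound each such term by $\Phi_j(\condsubmodelNoInput{u}, \ObsDist) \pm \maxdistshift_j$ using Assumption~\ref{assm:l1_dist}, and finally relax the weighted average to the max/min over $u$.

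First, I would repeat the iterated-expectation calculation from Section~\ref{sec:connecting_rset_to_target} to obtain
\begin{equation*}
\Phi_j(\optmodel, \UnobsDist) \;=\; \sum_{u \in \Uspace} \mathbb{P}(U=u)\, \Phi_j(\condsubmodelNoInput{u}, \CondDist{u}),
\end{equation*}
using that $\optmodel(\cdot, u) = \condsubmodelNoInput{u}$ on the event $\{U=u\}$. This reduces the problem to relating each $\Phi_j(\condsubmodelNoInput{u}, \CondDist{u})$ to $\Phi_j(\condsubmodelNoInput{u}, \ObsDist)$, which is purely a distribution-shift question.

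Next, I would exploit the fact that $\Phi_j(f, \mathcal{P}) = \mathbb{E}_{\mathcal{P}}[\phi_j(f, (X,Y))]$ is linear in the distribution $\mathcal{P}$. Decomposing $\ObsDist$ via the law of total probability,
\begin{equation*}
\ObsDist \;=\; \mathbb{P}(U=u)\,\CondDist{u} + \mathbb{P}(U\neq u)\,\mathcal{P}_{XY|U\neq u},
\end{equation*}
and applying linearity gives $\Phi_j(\condsubmodelNoInput{u}, \ObsDist) = \mathbb{P}(U=u)\Phi_j(\condsubmodelNoInput{u}, \CondDist{u}) + \mathbb{P}(U\neq u)\Phi_j(\condsubmodelNoInput{u}, \mathcal{P}_{XY|U\neq u})$. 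Rearranging and using Assumption~\ref{assm:l1_dist} yields
\begin{equation*}
\bigl| \Phi_j(\condsubmodelNoInput{u}, \CondDist{u}) - \Phi_j(\condsubmodelNoInput{u}, \ObsDist)\bigr| \;=\; \mathbb{P}(U\neq u)\,\bigl|\Phi_j(\condsubmodelNoInput{u}, \CondDist{u}) - \Phi_j(\condsubmodelNoInput{u}, \mathcal{P}_{XY|U\neq u})\bigr| \;\leq\; \maxdistshift_j,
\end{equation*}
so each conditional-distribution VI sits within $\maxdistshift_j$ of its observed-distribution counterpart.

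Finally, I would substitute these per-$u$ bounds into the convex combination: for any weights summing to one,
\begin{equation*}
\sum_u \mathbb{P}(U=u)\bigl[\Phi_j(\condsubmodelNoInput{u}, \ObsDist) - \maxdistshift_j\bigr] \;\leq\; \Phi_j(\optmodel, \UnobsDist) \;\leq\; \sum_u \mathbb{P}(U=u)\bigl[\Phi_j(\condsubmodelNoInput{u}, \ObsDist) + \maxdistshift_j\bigr],
\end{equation*}
and then relax the weighted averages on both sides to the $\min$ and $\max$ over $u$, giving the claimed interval. The only genuine subtlety is justifying linearity of $\Phi_j$ in the distribution (valid for metrics defined as expectations of $\phi_j$, which is the framework set up in Section~\ref{sec:methods}); once that is in hand, the rest is arithmetic on convex combinations, so I do not anticipate any serious obstacle.
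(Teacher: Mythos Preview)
Your proposal is correct and follows essentially the same approach as the paper: iterated expectation to write $\Phi_j(\optmodel,\UnobsDist)$ as a convex combination of $\Phi_j(\condsubmodelNoInput{u},\CondDist{u})$, the mixture decomposition $\Phi_j(\condsubmodelNoInput{u},\ObsDist)=\mathbb{P}(U=u)\Phi_j(\condsubmodelNoInput{u},\CondDist{u})+\mathbb{P}(U\neq u)\Phi_j(\condsubmodelNoInput{u},\mathcal{P}_{XY|U\neq u})$ to get $|\Phi_j(\condsubmodelNoInput{u},\CondDist{u})-\Phi_j(\condsubmodelNoInput{u},\ObsDist)|\leq\maxdistshift_j$, and then relaxation of the convex combination to the min/max. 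The only cosmetic difference is that the paper first relaxes the convex combination to $\min_u/\max_u$ of $\Phi_j(\condsubmodelNoInput{u},\CondDist{u})$ and then applies the per-$u$ bound, whereas you substitute the per-$u$ bound first and relax afterward; both orderings are valid.
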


\begin{proof}
    First, realize that the variable importance for $\optmodel$ on the complete data distribution--which includes $X, U, Y$--is actually a convex combination of the variable importance for each $\condsubmodelNoInput{}$ on its corresponding conditional data distribution:
    \begin{align*}
        \Phi_j(\optmodel, \UnobsDist) &= \mathbb{E}_{(\Obs, \Unobs, Y) \sim \UnobsDist}\left[ \phi_j(\optmodel, (\Obs, \Unobs, Y)) \right] \text{ by definition of $\Phi_j$} \\
        &= \sum_{u \in \mathcal{U}}\mathbb{P}(\Unobs = u)\mathbb{E}_{(X, Y) \sim \CondDist{u}}\left[ \phi_j(\optmodel, (\Obs, u, Y)) \mid \Unobs = u \right] \text{ by the law of iterated expectation}.
    \end{align*}
    Now, because we have conditioned on $U = u$, we know that $\optmodel = \condsubmodelNoInput{u}.$ This equality means that we can rewrite 
    \begin{align*}
        &\sum_{u \in \mathcal{U}}\mathbb{P}(\Unobs = u)\mathbb{E}_{(X, Y) \sim \CondDist{u}}\left[ \phi_j(\optmodel, (\Obs, u, Y)) \mid \Unobs = u \right] \\
        = &\sum_{u \in \mathcal{U}}\mathbb{P}(\Unobs = u)\mathbb{E}_{(X, Y) \sim \CondDist{u}}\left[ \phi_j(\condsubmodelNoInput{u}, (\Obs, Y)) \mid \Unobs = u \right] \\
        = &\sum_{u \in \mathcal{U}}\mathbb{P}(\Unobs = u)\Phi_j(\condsubmodelNoInput{u}, \CondDist{u}) \text{ by definition of $\Phi_j$}.
    \end{align*}
    In other words, $\Phi_j(\optmodel, \UnobsDist)$ is a convex combination of $\Phi_j(\condsubmodelNoInput{u}, \CondDist{u})$ across all $u \in \mathcal{U}.$ Because of this convexity, we then know that $\Phi_j(\optmodel, \UnobsDist)$ is upper and lower bounded by the conditional submodel importances:
    \begin{align*}
        \min_u \Phi_j(\condsubmodelNoInput{u}, \CondDist{u}) \leq \Phi_j(\optmodel, \UnobsDist) \leq \max_u \Phi_j(\condsubmodelNoInput{u}, \CondDist{u}).
    \end{align*}
    Unfortunately, these upper and lower bounds are still in terms of non-identifiable distributions because we never know when $U = u$ or $U \neq u.$ However, we will next show how we can construct upper and lower bounds on $\Phi_j(\condsubmodelNoInput{u}, \CondDist{u})$ in terms of the observable distribution $\ObsDist$ and our parameter $\maxdistshift_j.$

    Choose some $u \in \mathcal{U}.$ Recognize that we can compute the bias in our variable importance estimate by examining the difference between $\Phi_j(\condsubmodelNoInput{u}, \CondDist{u})$ and $\Phi_j(\condsubmodelNoInput{u}, \ObsDist).$ First, recognize by the law of iterated expectation that
    \begin{align*}
        \Phi_j(\condsubmodelNoInput{u}, \ObsDist) = \mathbb{P}(U = u)\Phi_j(\condsubmodelNoInput{u}, \CondDist{u}) + \mathbb{P}(U \neq u)\Phi_j(\condsubmodelNoInput{u}, \mathcal{P}_{XY|U \neq u}).
    \end{align*}
    So, we can rewrite the difference between conditional and marginal VI for $\condsubmodelNoInput{u}$ as
    \begin{align*}
        &\Phi_j(\condsubmodelNoInput{u}, \CondDist{u}) - \Phi_j(\condsubmodelNoInput{u}, \ObsDist) \\
        &= \Phi_j(\condsubmodelNoInput{u}, \CondDist{u}) - \mathbb{P}(U = u)\Phi_j(\condsubmodelNoInput{u}, \CondDist{u}) - \mathbb{P}(U \neq u)\Phi_j(\condsubmodelNoInput{u}, \mathcal{P}_{XY|U \neq u})) \\
        &= \left(1  - \mathbb{P}(U = u) \right)\Phi_j(\condsubmodelNoInput{u}, \CondDist{u}) - \mathbb{P}(U \neq u)\Phi_j(\condsubmodelNoInput{u}, \mathcal{P}_{XY|U \neq u})) \\
        &=  \mathbb{P}(U \neq u)\left[\Phi_j(\condsubmodelNoInput{u}, \CondDist{u}) - \Phi_j(\condsubmodelNoInput{u}, \mathcal{P}_{XY|U \neq u}))\right]
    \end{align*}
    where the last two lines come the law of total probability and factoring out the $\mathbb{P}(U \neq u)$ term. In other words, the distance between the conditional VI and marginal VI for $\condsubmodelNoInput{u}$ is broken into two parts: what are the chances that an observation is in another unobserved subgroup, and what is the distance between the VI for conditional subgroups. Let us use this decomposition to first find an upper bound on $\Phi_j(\condsubmodelNoInput{u}, \CondDist{u})$:
    \begin{align*}
        \Phi_j(\condsubmodelNoInput{u}, \CondDist{u})  &= \Phi_j(\condsubmodelNoInput{u}, \ObsDist) + \mathbb{P}(U \neq u)\left[\Phi_j(\condsubmodelNoInput{u}, \CondDist{u}) - \Phi_j(\condsubmodelNoInput{u}, \mathcal{P}_{XY|U \neq u}))\right] \\
        &\leq \Phi_j(\condsubmodelNoInput{u}, \ObsDist) + \mathbb{P}(U \neq u)\left|\Phi_j(\condsubmodelNoInput{u}, \CondDist{u}) - \Phi_j(\condsubmodelNoInput{u}, \mathcal{P}_{XY|U \neq u}))\right| \\
        &\qquad \text{by definition of absolute value} \\
        &\leq \Phi_j(\condsubmodelNoInput{u}, \ObsDist) + \left|\Phi_j(\condsubmodelNoInput{u}, \CondDist{u}) - \Phi_j(\condsubmodelNoInput{u}, \mathcal{P}_{XY|U \neq u}))\right| \\
        &\qquad \text{ because $\mathbb{P}(U \neq u) \leq 1$} \\
        &\leq \Phi_j(\condsubmodelNoInput{u}, \ObsDist) + \max_{u \in \mathcal{U}}\left|\Phi_j(\condsubmodelNoInput{u}, \CondDist{u}) - \Phi_j(\condsubmodelNoInput{u}, \mathcal{P}_{XY|U \neq u}))\right| \\
        &\qquad \text{ by definition of $\max$} \\
        &= \Phi_j(\condsubmodelNoInput{u}, \ObsDist) + \maxdistshift_j \text{ by definition of $\maxdistshift_j$} \\
        &\leq \max_{u \in \mathcal{U}} \Phi_j(\condsubmodelNoInput{u}, \ObsDist) + \maxdistshift_j \text{ by definition of $\max$}
    \end{align*}
    Our upper bound shows that for any conditional submodel, the VI for its corresponding conditional subgroup $\CondDist{u}$ is bounded by the largest conditional submodel VI applied to the observed distribution $\ObsDist$ plus some correction factor $\maxdistshift_j$. This correction factor measures how different is the VI for any conditional submodel between different subgroups.

    Now, we will derive a lower bound. These steps are very similar to the steps for deriving the upper bound are included for completeness:
    \begin{align*}
        \Phi_j(\condsubmodelNoInput{u}, \CondDist{u})  &= \Phi_j(\condsubmodelNoInput{u}, \ObsDist) + \mathbb{P}(U \neq u)\left[\Phi_j(\condsubmodelNoInput{u}, \CondDist{u}) - \Phi_j(\condsubmodelNoInput{u}, \mathcal{P}_{XY|U \neq u}))\right] \\
        &\geq \Phi_j(\condsubmodelNoInput{u}, \ObsDist) - \mathbb{P}(U \neq u)\left|\Phi_j(\condsubmodelNoInput{u}, \CondDist{u}) - \Phi_j(\condsubmodelNoInput{u}, \mathcal{P}_{XY|U \neq u}))\right| \\
        &\qquad \text{by definition of absolute value} \\
        &\geq \Phi_j(\condsubmodelNoInput{u}, \ObsDist) - \left|\Phi_j(\condsubmodelNoInput{u}, \CondDist{u}) - \Phi_j(\condsubmodelNoInput{u}, \mathcal{P}_{XY|U \neq u}))\right| \\
        &\qquad \text{ because $\mathbb{P}(U \neq u) \geq 0$} \\
        &\geq \Phi_j(\condsubmodelNoInput{u}, \ObsDist) - \max_{u \in \mathcal{U}}\left|\Phi_j(\condsubmodelNoInput{u}, \CondDist{u}) - \Phi_j(\condsubmodelNoInput{u}, \mathcal{P}_{XY|U \neq u}))\right| \\
        &\qquad \text{ by definition of $-\max$} \\
        &= \Phi_j(\condsubmodelNoInput{u}, \ObsDist) - \maxdistshift_j \text{ by definition of $\maxdistshift_j$} \\
        &\geq \min_{u \in \mathcal{U}} \Phi_j(\condsubmodelNoInput{u}, \ObsDist) - \maxdistshift_j \text{ by definition of $\min$}.
    \end{align*}
    Now, we will use these upper and lower bounds for any conditional submodel to bound $\Phi_j(\optmodel, \UnobsDist).$ 

    We will first focus on the lower bound. From the convex combination argument from earlier, we know that
    \begin{align*}
        \Phi_j(\optmodel, \UnobsDist) \geq \min_{u \in \mathcal{U}} \Phi_j(\condsubmodelNoInput{u}, \CondDist{u}).
    \end{align*}
    From our derived lower bound at the conditional submodel level, we know that 
    \begin{align*}
        \min_{u \in \mathcal{U}} \Phi_j(\condsubmodelNoInput{u}, \CondDist{u}) \geq \min_{u \in \mathcal{U}} \Phi_j(\condsubmodelNoInput{u}, \ObsDist) - \tau_j.
    \end{align*}
    Therefore,
    \begin{align*}
        \Phi_j(\optmodel, \UnobsDist) \geq \min_{u \in \mathcal{U}} \Phi_j(\condsubmodelNoInput{u}, \ObsDist) - \tau_j.
    \end{align*}
    Because we similarly know that
    \begin{align*}
        \max_{u \in \mathcal{U}} \Phi_j(\condsubmodelNoInput{u}, \CondDist{u}) \leq \max_{u \in \mathcal{U}} \Phi_j(\condsubmodelNoInput{u}, \ObsDist) + \tau_j,
    \end{align*}
    we can also conclude that
    \begin{align*}
        \Phi_j(\optmodel, \UnobsDist) \leq \min_{u \in \mathcal{U}} \Phi_j(\condsubmodelNoInput{u}, \ObsDist) + \tau_j.
    \end{align*}
    That is, 
$$
    \Phi_j(\optmodel, \UnobsDist) \in \left[\min_{\condsubmodelNoInput{u}}\Phi_j(\condsubmodelNoInput{u}, \ObsDist) - \maxdistshift_j, \max_{\condsubmodelNoInput{u}}\Phi_j(\condsubmodelNoInput{u}, \ObsDist) + \maxdistshift_j\right]
$$ 
as required.
\end{proof}

\newpage

\begin{theorem}
\label{app-thm:optmodel_coverage}
Let $\vibound$ and $\gamma$ be defined as in Theorem \ref{thm:vi_coverage_uncond}, $\smarteps$ and $\delta$ be defined as in Theorem~\ref{cor:s_coverage}, and $\maxdistshift_j$ be defined as in Assumption~\ref{assm:l1_dist}.

With probability at least $1 - (\delta + \gamma), $
    $$
    \Phi_j(\optmodel, \UnobsDist)\in\left[\inf_{f \in \estrset(\smarteps + \maxcondloss + \lambda_{\sup})}\Phi_j(f, \D) - \maxdistshift_j - \vibound, 
     \sup_{f \in \estrset(\smarteps + \maxcondloss + \lambda_{\sup})}\Phi_j(f, \D) + \maxdistshift_j + \vibound
     \right]
    $$
\end{theorem}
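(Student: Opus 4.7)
The plan is to decompose this theorem into a purely deterministic structural bound on $\Phi_j(\optmodel, \UnobsDist)$ in terms of conditional submodel importances, followed by a probabilistic step that translates those submodel importances into quantities computable from the observed data via the estimated Rashomon set.

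First I would invoke Lemma \ref{app-lem:optmodel_coverage}, which holds deterministically under Assumption \ref{assm:l1_dist}. It gives
\begin{equation*}
\Phi_j(\optmodel, \UnobsDist) \in \left[\min_{\condsubmodelNoInput{u} \in \allcondmodels} \Phi_j(\condsubmodelNoInput{u}, \ObsDist) - \maxdistshift_j,\; \max_{\condsubmodelNoInput{u} \in \allcondmodels} \Phi_j(\condsubmodelNoInput{u}, \ObsDist) + \maxdistshift_j\right].
\end{equation*}
This removes the dependence on the unidentifiable conditional distributions $\CondDist{u}$ and replaces it with the observed distribution $\ObsDist$, at the cost of the correction $\maxdistshift_j$ on each side.

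Next I would invoke Theorem \ref{thm:vi_coverage_uncond}, which already bundles the two sources of randomness: the finite-sample Rashomon set coverage (failure probability $\delta$ from Corollary \ref{cor:s_coverage}) and the model-level variable importance estimation error (failure probability $\gamma$). It yields, with probability at least $1 - (\delta + \gamma)$,
\begin{equation*}
\left\{\Phi_j(\condsubmodelNoInput{u}, \ObsDist) : \condsubmodelNoInput{u} \in \allcondmodels \right\} \subseteq \left[ \inf_{f' \in \estrset(\smarteps + \maxcondloss + \suplambda)}\Phi_j(f', \D) - \vibound,\; \sup_{f' \in \estrset(\smarteps + \maxcondloss + \suplambda)}\Phi_j(f', \D) + \vibound \right].
\end{equation*}
In particular, on this high-probability event the minimum (respectively maximum) of $\Phi_j(\condsubmodelNoInput{u}, \ObsDist)$ over $\allcondmodels$ is lower-bounded (respectively upper-bounded) by the infimum (supremum) over the estimated Rashomon set, adjusted by $\pm \vibound$.

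Combining the two steps is then routine: I substitute the bounds from Theorem \ref{thm:vi_coverage_uncond} into the interval from Lemma \ref{app-lem:optmodel_coverage}, giving the claimed interval with an extra $\maxdistshift_j$ on each side. Because Lemma \ref{app-lem:optmodel_coverage} contributes no randomness, the overall failure probability is exactly the $\delta + \gamma$ already supplied by Theorem \ref{thm:vi_coverage_uncond}, so no additional union bound is required. The main subtlety to check, which I would be careful about, is simply that the probabilistic event in Theorem \ref{thm:vi_coverage_uncond} is the same event on which the chain of inequalities holds simultaneously for both the min and the max, so that a single $1 - (\delta + \gamma)$ guarantee suffices; this is already handled by the simultaneous quantification ``$\forall f \in \allcondmodels$'' inside the statement of Theorem \ref{thm:vi_coverage_uncond}.
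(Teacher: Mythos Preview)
Your proposal is correct and mirrors the paper's own proof: both combine the deterministic bound from Lemma \ref{app-lem:optmodel_coverage} with the high-probability containment of Theorem \ref{thm:vi_coverage_uncond}, and then chain the inequalities for the lower and upper bounds. The only cosmetic difference is the order in which the two ingredients are introduced; the logic and the resulting $1-(\delta+\gamma)$ guarantee are identical.
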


\begin{proof}
Theorem \ref{app-thm:vi_coverage_uncond} states that, with probability at least $1 - (\delta + \gamma),$
\begin{align*}
     &\left\{\Phi_j(f, \ObsDist) \mid \condsubmodelNoInput{} \in \allcondmodels \right\} 
     \subseteq \left[ \inf_{f' \in \estrset(\smarteps + \maxcondloss + \lambda_{\sup})} \Phi_j(f', \D) - \vibound, \sup_{f' \in \estrset(\smarteps + \maxcondloss + \lambda_{\sup})} \Phi_j(f', \D) + \vibound \right].
\end{align*}
This implies that
\begin{align*}
    \inf_{f' \in \estrset(\smarteps + \maxcondloss + \lambda_{\sup})}\Phi_j(f', \D) - \vibound &\leq \inf_{\condsubmodelNoInput{u} \in \allcondmodels}\Phi_j(\condsubmodelNoInput{u}, \ObsDist)\\
    \iff \inf_{f' \in \estrset(\smarteps + \maxcondloss + \lambda_{\sup})}\Phi_j(f', \D) - \vibound - \maxdistshift_j &\leq \inf_{\condsubmodelNoInput{u} \in \allcondmodels}\Phi_j(\condsubmodelNoInput{u}, \ObsDist) - \maxdistshift_j\\
    &\leq \Phi_j(\optmodel, \UnobsDist) & \text{By Lemma \ref{app-lem:optmodel_coverage}}
\end{align*}
A symmetric argument applies for the upper bound, yielding that, with probability at least $1 - (\delta + \gamma),$
    $$
     \Phi_j(\optmodel, \UnobsDist)\in\left[\inf_{f \in \estrset(\smarteps + \maxcondloss + \lambda_{\sup})}\Phi_j(f, \D) - \maxdistshift_j - \vibound, 
     \sup_{f \in \estrset(\smarteps + \maxcondloss + \lambda_{\sup})}\Phi_j(f, \D) + \maxdistshift_j + \vibound
     \right]
    $$
as required.

\end{proof}

\newpage

\newpage
\section{Experimental Details}
\label{app:experimental_details}
All experiments were run using TreeFarms \cite{xin2022exploring} to compute the Rashomon set of decision trees, with a depth bound of 3 and a regularization value of $0.001$. For computational efficiency, we dropped all but the first 8 variables (before binarization) from each dataset in our semi-synthetic experiments (except for the Compas dataset, which only has 7 features); during binarization, each variable was processed into 3 binary variables as evenly spaced quantiles over the distribution of the original input variable. We used 80\% of all data in each setting to compute the Rashomon set, and the remaining 20\% to estimate variable importance over this set. 

Note that wine quality \cite{cortez2009modeling} reports the rating for each sample on a 0 to 10 scale. We converted this into  binary classification problem where the goal was to predict whether the rating was greater than or equal to 5.

All experiments for this work were performed on an academic institution’s cluster computer. We used up to 10 machines in parallel, each with a Dell R730’s with 2 Intel Xeon E5-2640 Processors (40 cores).

\newpage
\section{Experiments with Model Class Size} \label{app-sec:experiments_w_model_class_size}
Section~\ref{sec:experiments} evaluates each of our primary theoretical claims empirically by first estimating the size of the Rashomon set. Because we are using an \textit{estimate} of the Rashomon set size, these bounds are not necessarily guaranteed to be finite-sample valid. For scientists worried about finite-sample validity, we present an alternative strategy where we use the size of the model class as our upper bound on the size of $\allcondmodels.$

\begin{figure}
    \centering
    \includegraphics[width=1\linewidth]{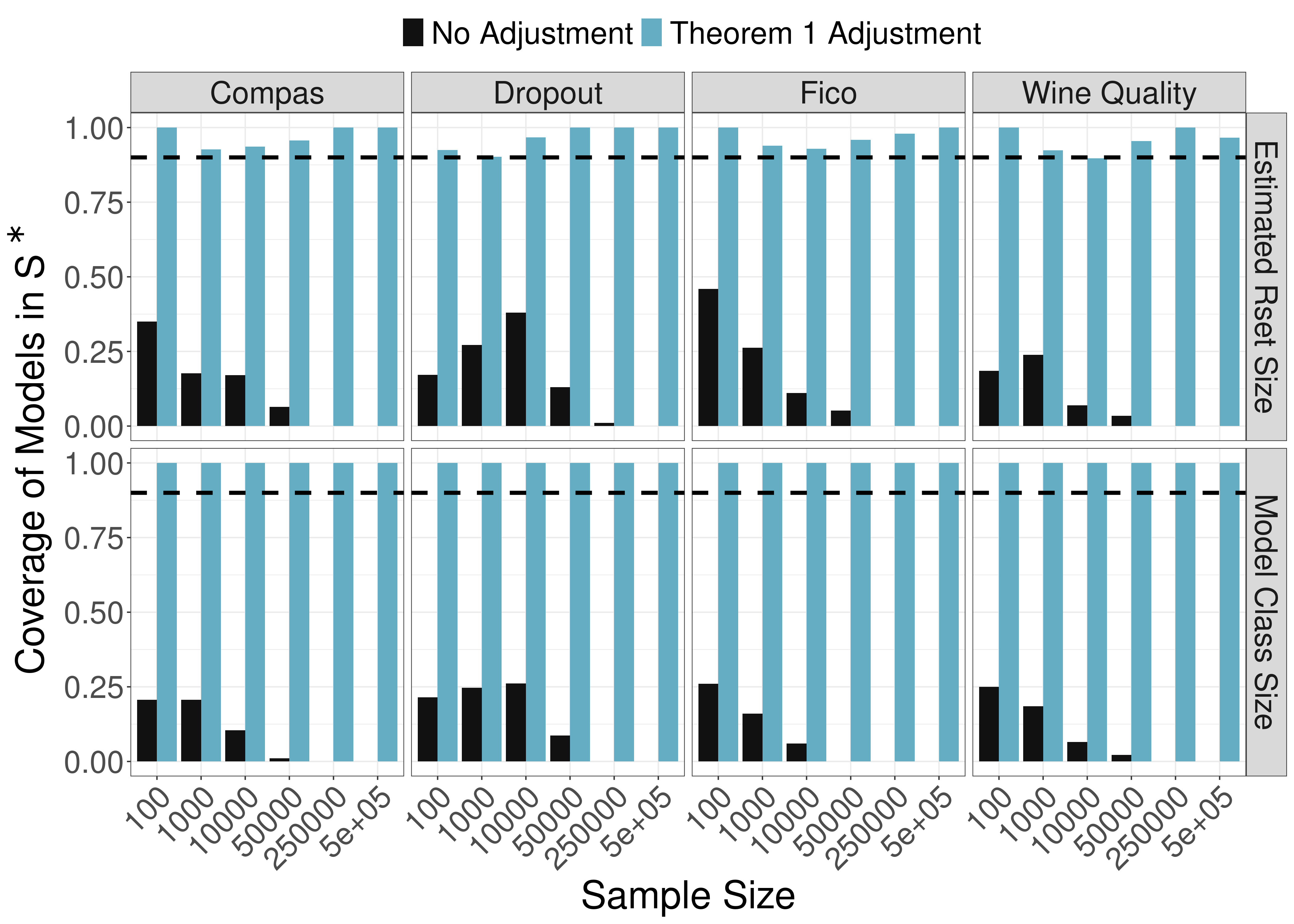}
    \caption{Verifying Theorem~\ref{thm:smart_eps} in finite sample datasets. We compute the proportion of 100 random draws of the each dataset in which Rashomon sets estimated with the Rashomon threshold adjusting for finite sample biases as in Theorem \ref{thm:smart_eps} (in blue) and without any adjustment (in black) captures each $\condsubmodelNoInput{u}$ for each setting. The target coverage rate is $\geq 0.9$, with $\delta=0.1$. Across all sample sizes and datasets, omitting finite sample adjustments yields Rashomon sets that leave out necessary models. In contrast, our adjustment yields the target coverage rate, verifying the theorem holds.  
    We use the estimated Rashomon set size (top row) and the model class size (bottom row) as our upper bounds on the size of $\allcondmodels$.}
    \label{fig:app-thm1}
\end{figure}

Figure~\ref{fig:app-thm1} displays the results verifying Theorem~\ref{thm:smart_eps}. The top row displays the main paper results again while the bottom row displays the new results; in the bottom row, not adjusting for finite sample or regularization behavior leads to overly conservative Rashomon sets with severe undercoverage of the models in $\allcondmodels$. In contrast, our approach with an adjusted Rashomon threshold guarantees coverage at all sample sizes. However, this conservative correction is overly conservative, leading to Rashomon sets that contains $\allcondmodels$ at \textit{all} sample sizes. In contrast, using an estimate of the Rashomon set size leads to Rashomon sets that are probabilistically more valid.

\begin{figure}
    \centering
    \includegraphics[width=1\linewidth]{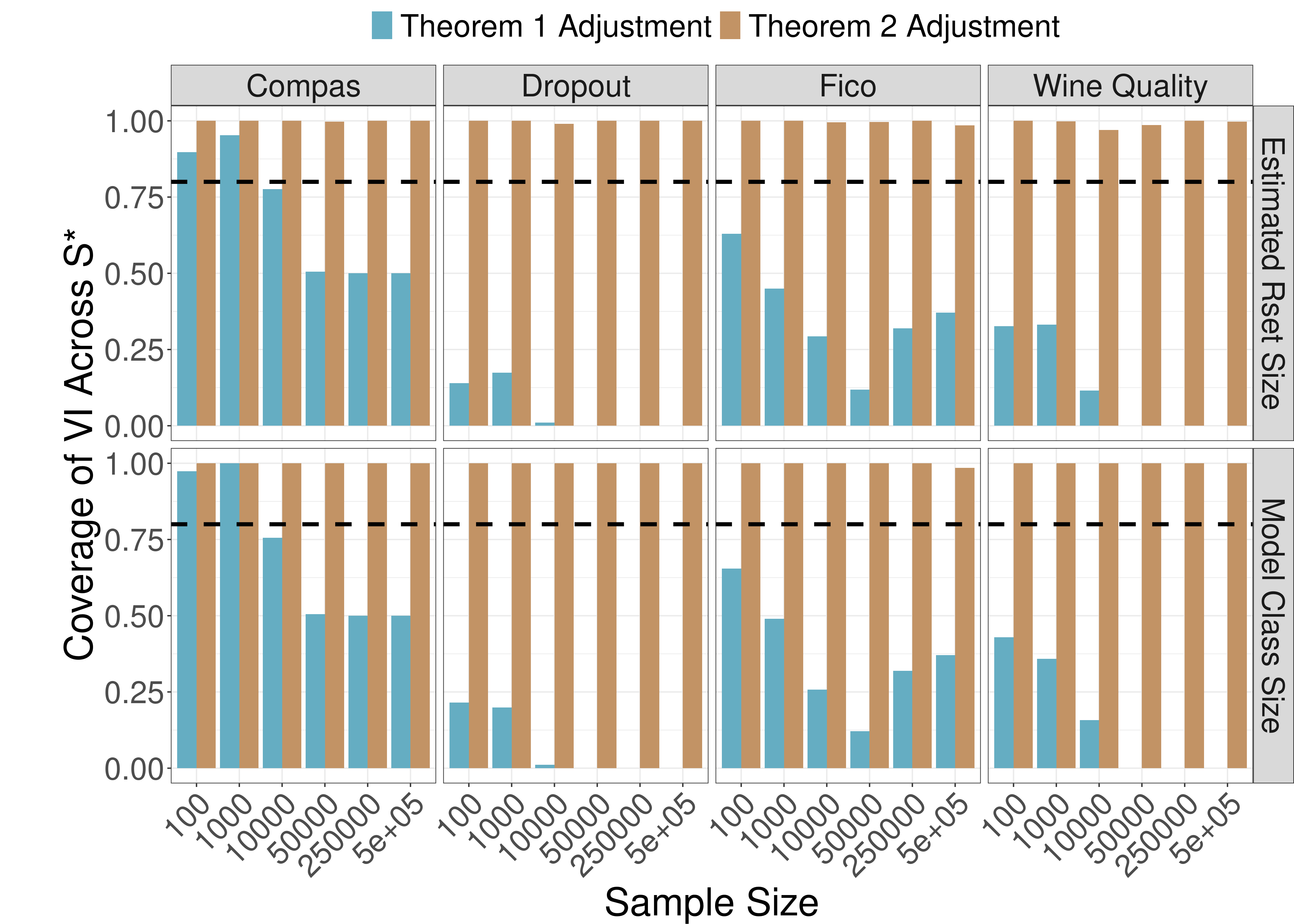}
    \caption{Verifying Theorem~\ref{thm:vi_coverage_uncond}.
    We achieve the specified coverage rate of $\geq0.8$ only when adjusting for (i) model uncertainty via Theorem~\ref{thm:smart_eps} and (ii) variable importance estimation uncertainty (in gold). Adjusting for model uncertainty alone (in blue) is not sufficient. For each setting, we compute the proportion of 100 experiments where our variable importance bounds capture the true variable importance for all submodels $f_u \in \allcondmodels$, averaged over variables. We use the estimated Rashomon set size (top row) and the model class size (bottom row) as our upper bounds on the size of $\allcondmodels$.}
    \label{fig:app-thm2}
\end{figure}

Figure~\ref{fig:app-thm2} displays the results verifying Theorem~\ref{thm:vi_coverage_uncond}. The top row displays the main paper results again while the bottom row displays the new results; in the bottom row, adjusting only for model uncertainty and not VI-estimation uncertainty yields bounds that undercover the variable importance for models in $\allcondmodels$. In contrast, our approach that accounts for VI estimation uncertainty yields bounds that achieve the specified error rate. However, this conservative correction is overly conservative, leading to bounds that contain the true VI for all models in $\allcondmodels$ always. In contrast, using an estimate of the Rashomon set size leads to bounds that are slightly less conservative; for example, the Wine Quality dataset at 10,000 and 50,000 samples creates bounds that have coverage $<100\%.$

\begin{figure}
    \centering
    \includegraphics[width=1\linewidth]{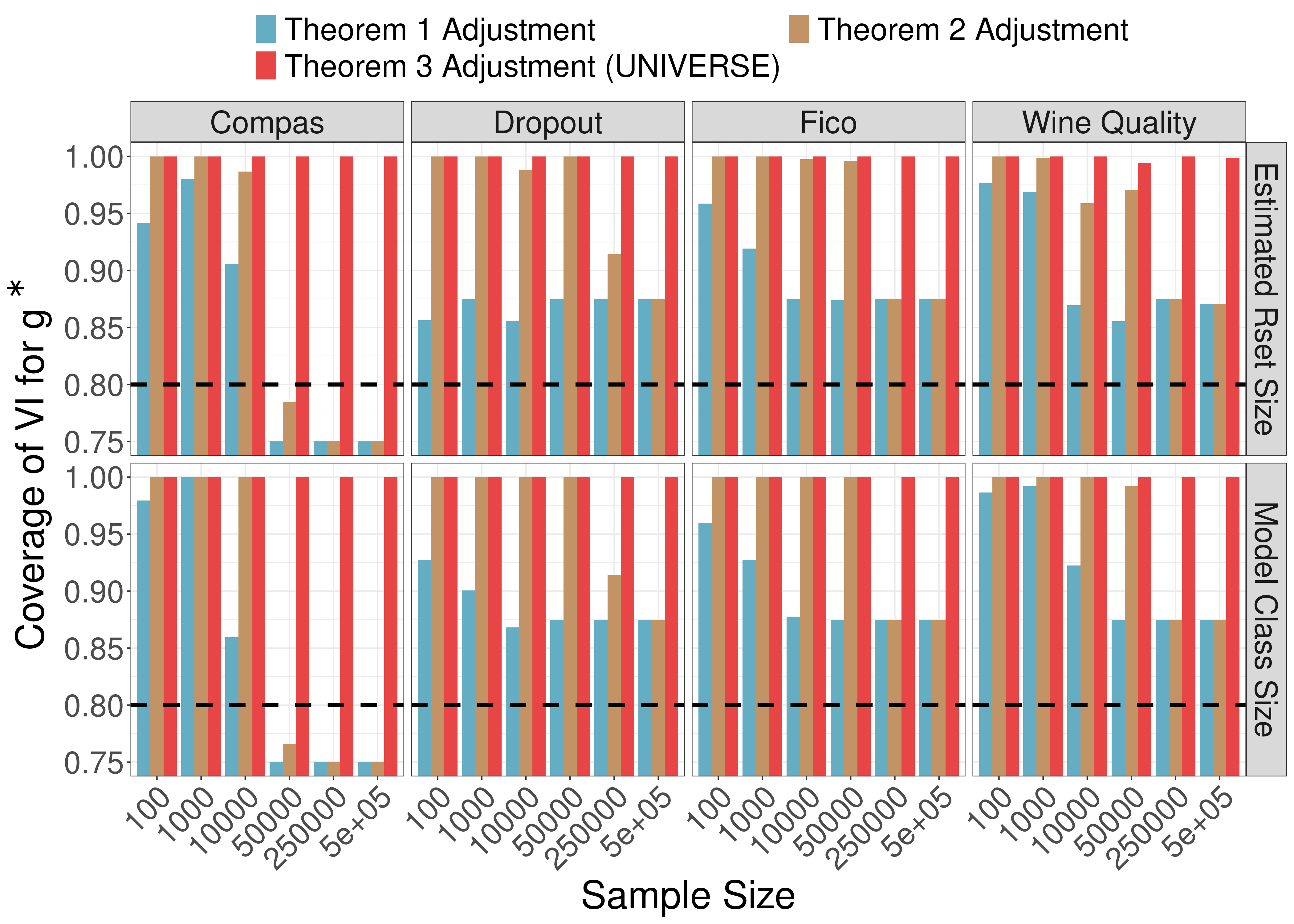}
    \caption{Verifying Theorem~\ref{thm:optmodel_coverage}.
    We consistently achieve the specified coverage rate of $\geq 0.8$ only when we account for (i) model uncertainty, (ii) variable importance uncertainty, and (iii) VI drift. 
    Each bar measures the proportion of 100 experiments in which our bounds capture the true variable importance for the true model $\optmodel$. Plots are colored such that blue only accounts for finite sample model uncertainty as in Theorem~\ref{thm:smart_eps}, gold also adjusts for uncertainty in estimating subtractive model reliance (MR) at the model-level as in Theorem~\ref{thm:vi_coverage_uncond}, and red adjusts for the previous two \textit{and} distribution shifts induced by omitted variables \ref{thm:optmodel_coverage}. All three adjustments are necessary to achieve the target coverage rate of $\geq 0.8$, with $\delta=\gamma=0.1$.  We use the estimated Rashomon set size (top row) and the model class size (bottom row) as our upper bounds on the size of $\allcondmodels$.}
    \label{fig:app-thm3}
\end{figure}

Finally, Figure~\ref{fig:app-thm3} displays the results verifying Theorem~\ref{thm:optmodel_coverage}. The top row displays the main paper results again while the bottom row displays the new results; in the bottom row, not adjusting for VI drift yields bounds that undercover the variable importance for the true model $g^*.$ For example, the blue and gold bars that represent only performing adjustments from Theorem~\ref{thm:smart_eps} and Theorem~\ref{thm:vi_coverage_uncond} respectively achieve a coverage of only 0.76 on the Compas dataset even at extremely large sample sizes of 500,000 even when using the conservative model class size adjustment (bottom row). In contrast, our approach remains valid. However, using the conservative model class size is overly conservative with bounds that contain the true VI at all sample sizes for all datasets. In contrast, using the estimated Rashomon set size yields slightly tighter intervals that can achieve $<100\%$ coverage (e.g., Wine Quality dataset with sample size 50,000).

\section{Interval Widths} \label{app-sec:experiments_w_model_class_size}
In this section, we evaluate the width of our intervals. We display the average interval width for experiment conducted to evaluate coverage in Section~\ref{sec:experiments}. As the sample size increases, our intervals become much tighter--regardless of whether we estimate the Rashomon set size or use the model class size as our upper bound on the size of $\allcondmodels$. Importantly, each additional adjustment increases interval widths very slightly but contribute to coverage at the specified rate. Our adjustments are therefore a small cost to pay for improved Type-1 error control, especially at larger sample sizes.

\begin{figure}
    \centering
    \includegraphics[width=1\linewidth]{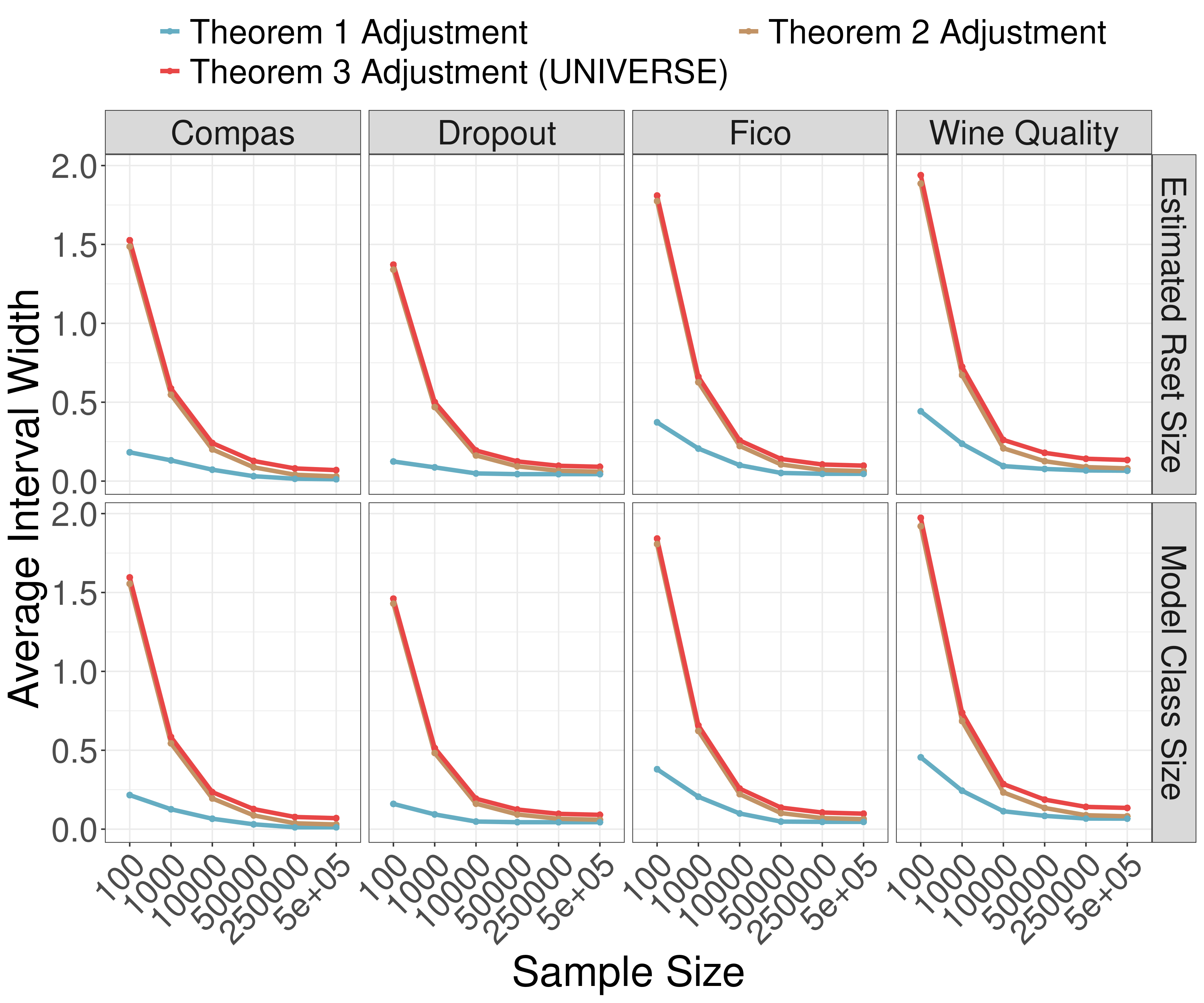}
    \caption{Evaluating the width of our intervals. We display the average interval width for each experiment conducted to evaluate coverage in Section~\ref{sec:experiments}. As the sample size increases, our intervals become much tighter--regardless of whether we estimate the Rashomon set size or use the model class size as our upper bound on the size of $\allcondmodels$.}
    \label{fig:placeholder}
\end{figure}

\end{document}